\definecolor{mydarkblue}{rgb}{0,0.08,0.45}
\newtheorem{definition}{Definition}
\DeclareMathOperator*{\argmin}{arg\,min}
\newcommand*{\A}{\mathbf{A}}
\newcommand*{\D}{\mathbf{D}}
\newcommand*{\U}{\mathbf{U}}
\newcommand*{\f}{\mathbf{f}}
\newcommand*{\h}{\mathbf{h}}
\newcommand*{\g}{\mathbf{g}}
\newcommand*{\s}{\mathbf{s}}
\newcommand*{\G}{\mathbf{G}}
\newcommand*{\J}{\mathbf{J}}
\newcommand*{\x}{\mathbf{x}}
\renewcommand{\parallel}{\mathrel{/\mkern-5mu/}}
\newcommand{\notparallel}{%
  \mathrel{\mathpalette\not@parallel\relax}%
}
\newcommand{\not@parallel}[2]{%
  \ooalign{\reflectbox{$\m@th#1\smallsetminus$}\cr\hfil$\m@th#1\parallel$\cr}%
}
\title{On the Identifiability of Nonlinear ICA: Sparsity and Beyond}
\author{
~~Yujia Zheng$^{1, 2}$,~~Ignavier Ng$^1$,~~Kun Zhang$^{1, 2}$\\
$^1$ Carnegie Mellon University\\
$^2$ Mohamed bin Zayed University of Artificial Intelligence\\
\texttt{\{yujiazh, ignavierng, kunz1\}@cmu.edu}
}
\begin{document}

\maketitle
\begin{abstract}

\looseness=-1
Nonlinear independent component analysis (ICA) aims to recover the underlying independent latent sources from their observable nonlinear mixtures. How to make the nonlinear ICA model identifiable up to certain trivial indeterminacies is a long-standing problem in unsupervised learning. Recent breakthroughs reformulate the standard independence assumption of sources as conditional independence given some auxiliary variables (e.g., class labels and/or domain/time indexes) as weak supervision or inductive bias. However, nonlinear ICA with unconditional priors cannot benefit from such developments. We explore an alternative path and consider only assumptions on the mixing process, such as \textit{Structural Sparsity}. We show that under specific instantiations of such constraints, the independent latent sources can be identified from their nonlinear mixtures up to a permutation and a component-wise transformation, thus achieving nontrivial identifiability of nonlinear ICA without auxiliary variables. We provide estimation methods and validate the theoretical results experimentally. The results on image data suggest that our conditions may hold in a number of practical data generating processes.

\end{abstract}
\section{Introduction} \label{sec:introduction}

\looseness=-1
Nonlinear independent component analysis (ICA) is fundamental in unsupervised learning. It generalizes linear ICA \citep{comon1994independent} to identify latent sources from observations, which are assumed to be a nonlinear mixture of the sources. For an observed vector $\mathbf{x}$, nonlinear ICA expresses it as
$\mathbf{x} = \mathbf{f}(\mathbf{s})$, where $\mathbf{f}$ is an unknown invertible mixing function, and $\mathbf{s}$ is a latent random vector representing the (marginally) independent sources. The goal is to recover function $\mathbf{f}$ as well as sources $\s$ from the observed mixture $\mathbf{x}$ up to certain indeterminacies. While nonlinear ICA is of general interest in a variety of tasks, such as disentanglement \citep{lachapelle2021disentanglement} and unsupervised learning \citep{oja2002unsupervised}, its identifiability has been a long-standing problem for decades. The key difficulty is that, without additional assumptions, there exist infinite ways to transform the observations into independent components while still mixed w.r.t. the sources \citep{hyvarinen1999nonlinear}. 

\looseness=-1
To deal with this challenge, existing works introduce the auxiliary variable $\mathbf{u}$ (e.g., class label, domain index, time index) and assume that sources are conditionally independent given $\mathbf{u}$ \citep{hyvarinen2016unsupervised, hyvarinen2017nonlinear, hyvarinen2019nonlinear, khemakhem2020variational, lachapelle2021disentanglement}. Most of them require the auxiliary variable to be observable, while clustering-based methods \citep{willetts2021don} and those for time series \citep{halva2020hidden, halva2021disentangling, yao2021learning} are exceptions. These works impose mild restrictions on the mixing process but require many distinct values of $\mathbf{u}$ for identifiability, which might restrict their practical applicability. Motivated by this, \cite{annoymous2022iclr} reduce the number of required distinct values by strengthening the functional assumptions on the mixing process. 
As described, all these results are based on conditional independence of the sources given auxiliary variable $\mathbf{u}$, instead of the standard marginal independence assumption, to achieve identifiability in specific tasks, thanks to the weak supervision provided by the auxiliary variables.
The identifiability of the original setting without auxiliary variables, which is crucial for unsupervised learning, stays unsolved.

Instead of introducing auxiliary variables, there exists the possibility to rely on further restrictions on the mixing functions to achieve identifiability, but limited results are available in the literature over the past two decades. With the assumption of conformal transformation, \citet{hyvarinen1999nonlinear} proved identifiability of the nonlinear ICA solution up to rotation indeterminacy for only two sources. Another result is shown for the mixture that consists of component-wise nonlinearities added to a linear mixture \citep{taleb1999source}. 

\looseness=-1
In this work, we aim to show the identifiability of nonlinear ICA with unconditional priors. We prove that without any kind of auxiliary variables, the latent sources can be identified from nonlinear mixtures up to a component-wise invertible transformation and a permutation under assumptions only on the mixing process. Specifically, we mainly focus on the \textit{structural sparsity} condition w.r.t. the structure from sources to observed variables, i.e., the support of the Jacobian matrix of the mixing function. Under structural conditions on the mixing process, one could identify the true sources by sparsity regularization during estimation (Thm. \ref{thm:nl_identifiability_ss}). By removing rotation indeterminacy, structural sparsity benefits the identifiability of linear ICA with Gaussian sources as well, which was previously thought to be unsolvable (Prop. \ref{prop:l_identifiability_ss_1}, Thm. \ref{thm:l_identifiability_ss_2}). We establish, to the best of our knowledge, one of the first identifiability results of the original nonlinear ICA model in a fully unsupervised setting. 

\looseness=-1
Moreover, we also provide identifiability results for the undercomplete case, where the number of observed variables is larger than that of sources (Thm. \ref{thm:l_identifiability_ss_2}, Thm. \ref{thm:nl_identifiability_ss_2}). This is of great practical interest and cannot be handled with previous conditions on nonlinear ICA, even with the help of auxiliary variables \citep{hyvarinen2019nonlinear}. We validate our theoretical claims experimentally, and the results on the image dataset suggest that our conditions appear to be reasonable for practical data generating processes.

\section{Preliminaries}

We consider the following data-generating process of ICA
\begin{align} 
    p_{\s}(\mathbf{s}) &=\prod_{i=1}^{n} p_{\s_i}(\s_{i}), \label{eq:independent_sources} \\
    \mathbf{x} &= \mathbf{f}(\s), \label{eq:mixing_function}
\end{align}
where $n$ denotes the number of latent sources, $\mathbf{x}=(\x_1,\dots,\x_n)$ denotes the observed random vector, $\mathbf{s}=(\s_1,\dots,\s_n)$ is the latent random vector representing the marginally independent sources, $p_{s_i}$ is the marginal probability density function (PDF) of the $i$-th source $s_i$, $p_{\s}$ is the  joint PDF of random vector $\mathbf{s}$, and $\mathbf{f}: \mathbf{s} \rightarrow \mathbf{x}$ denotes a nonlinear mixing function. For linear ICA, Eq. \eqref{eq:mixing_function} is restricted as 
\begin{equation}
	\mathbf{x} = \mathbf{A}\mathbf{s}, \label{eq:linear_mixing_function}
\end{equation}
where $\mathbf{A} \coloneqq \left[\mathbf{A}_{:,1} \cdots \mathbf{A}_{:,n}\right]$  denotes the mixing matrix. The goal of ICA is to learn an estimated unmixing function $\hat{\mathbf{f}}^{-1}: \mathbf{x} \rightarrow \hat{\mathbf{s}}$ (or alternatively in the linear case, an estimated unmixing matrix $\hat{\A}^{-1}$), such that $ \hat{\mathbf{s}}=(\hat{\s}_1,\dots,\hat{\s}_n)$ consists of independent estimated sources.

\textbf{Notations.} \ \ 
Throughout this work, for any matrix $\mathbf{M}$, we use $\mathbf{M}_{i,:}$ to refer to its $i$th row, and $\mathbf{M}_{:,j}$ to indicate its $j$th column. For any set of indices $\mathcal{S} \subset\{1, \ldots, m\} \times\{1, \ldots, n\}$, we analogously denote $\mathcal{S}_{i,:}\coloneqq\{j \mid(i, j) \in \mathcal{S}\}$ and $\mathcal{S}_{:,j}\coloneqq\{i \mid(i, j) \in \mathcal{S}\}$. We denote by $|\cdot|$ the cardinality of a set. We also define the following technical notations.
\begin{definition}\label{def:subspace}
Given a subset $\mathcal{S} \subseteq \{1, \ldots, n\}$, the subspace $\mathbb{R}_{\mathcal{S}}^{n}$ is defined as
\begin{equation}
    \mathbb{R}_{\mathcal{S}}^{n}\coloneqq\left\{z \in \mathbb{R}^{n} \mid i \notin \mathcal{S} \Longrightarrow z_{i}=0\right\}.
\end{equation}
\end{definition}
In other words, $\mathbb{R}_{\mathcal{S}}^{n}$ refers to the subspace of $\mathbb{R}^{n}$ indicated by an index set $\mathcal{S}$. In the following, we define the support of a matrix.
\begin{definition}\label{def:supp_matrix}
The support of matrix $\mathbf{M} \in \mathbb{R}^{m \times n}$ is defined as
\begin{equation}
    \operatorname{supp}(\mathbf{M})\coloneqq\left\{(i,j) \mid \mathbf{M}_{i,j} \neq 0 \right\}.
\end{equation}
\end{definition}
With slight abuse of notation, we also reuse the notation $\operatorname{supp}$ to denote the support of a matrix-valued function, depending on the context.
\begin{definition}\label{def:supp_matrix_function}
The support of function $\mathbf{M}:\Theta \rightarrow \mathbb{R}^{m \times n}$ is defined as
\begin{equation}
    \operatorname{supp}(\mathbf{M}(\Theta))\coloneqq\left\{(i,j) \mid \exists \theta \in \Theta, \mathbf{M}(\theta)_{i,j} \neq 0 \right\}.
\end{equation}
\end{definition}

\section{Identifiability with Structural Sparsity} \label{sec:ss}
\subsection{Nonlinear ICA with Unconditional Priors: Structural Sparsity}

\looseness=-1
In this section, we consider the identifiability of nonlinear ICA with unconditional priors, where the mixing function $\mathbf{f}$ in Eq. \eqref{eq:mixing_function} is nonlinear. Different from recent breakthroughs in identifiable nonlinear ICA with side information \citep{hyvarinen2019nonlinear, lachapelle2021disentanglement}, in our setting, sources $\s$ in Eq. \eqref{eq:independent_sources} do not need to be mutually conditionally independent given an auxiliary variable. Instead, our setting is consistent with the original ICA problem based on a general marginal independence assumption of sources. Besides, no particular form of the source distribution is assumed, which is different from most works that directly assume an exponential family \citep{hyvarinen2019nonlinear}. The goal of identifiable nonlinear ICA is to estimate the unmixing function $\hat{\mathbf{f}}^{-1}$ so that the independent sources are identified up to a certain indeterminacy, which is usually a composition of a component-wise invertible transformation and a permutation \citep{hyvarinen1999nonlinear}. We formulate the structural sparsity condition below and present the identifiability result with its proof shown in Appx. \ref{sec:proof_nl_identifiability_ss}. It is worth noting that \citet{lachapelle2021disentanglement} leverage sparse mechanism between latents and auxiliary variables for disentanglement, by which part of the assumptions and proof technique are inspired. For brevity, we denote $\mathcal{F}$ and $\hat{\mathcal{F}}$ as the support of the Jacobian $\J_{\f}(\s)$ and $\J_{\hat{\f}}(\hat{\s})$, respectively. And $\mathrm{T}$ is a matrix with the same support of $\mathbf{T}(\s)$ in $\J_{\hat{\f}}(\hat{\s}) = \J_{\f}(\s) \mathbf{T}(\s)$.

\begin{restatable}{theorem}{NLIdentifiabilitySS}
\label{thm:nl_identifiability_ss}
Let the observed data be sampled from a nonlinear ICA model as defined in Eqs. (1) and (2). Suppose the following assumptions hold:
\begin{enumerate}[label=\roman*.,ref=\roman*]
  \item Mixing function $\f$ is invertible and smooth. Its inverse is also smooth.\label{assum:nl_2}
  \item For all $i \in \{1, \ldots, n\}$, there exist $\{\s^{(\ell)}\}_{\ell=1}^{|\mathcal{F}_{i,:}|}$ and $\mathrm{T}$ s.t. $\operatorname{span}\{\J_{\f}(\s^{(\ell)})_{i,:}\}_{\ell=1}^{|\mathcal{F}_{i,:}|} = \mathbb{R}_{\mathcal{F}_{i,:}}^{n}$ and $\left[ {\J_{\f}(\s^{(\ell)})}\mathrm{T} \right]_{i,:} \in \mathbb{R}_{\hat{\mathcal{F}}_{i,:}}^{n}.$
  \label{assum:nl_3}
  \item $|\hat{\mathcal{F}}| \leq |\mathcal{F}| $.\label{assum:nl_4}
  \item \underline{(Structural Sparsity)} For all $k \in \{1, \ldots, n\}$, there exists $\mathcal{C}_{k}$ such that 
  $$\bigcap_{i \in \mathcal{C}_{k}} \mathcal{F}_{i, :}=\{k\}.$$\label{assum:nl_5}
\end{enumerate}
\vspace{-1em}
Then $\h:= \hat{\f}^{-1} \circ \f$ is a composition of a component-wise invertible transformation and a permutation.
\end{restatable}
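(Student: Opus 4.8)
The plan is to analyze the indeterminacy transformation $\h = \hat{\f}^{-1} \circ \f$ at the level of its Jacobian, using the relation $\J_{\hat{\f}}(\hat{\s}) = \J_{\f}(\s)\, \J_{\h}(\s)^{-1}$, or equivalently working with the matrix-valued object $\mathbf{T}(\s)$ from the statement so that $\J_{\hat{\f}}(\hat{\s}) = \J_{\f}(\s)\,\mathbf{T}(\s)$ where $\mathbf{T}(\s) = \J_{\h}(\s)^{-1}$ (up to the inverse/transpose bookkeeping). Since both $\f$ and $\hat{\f}$ are invertible and smooth with smooth inverses (Assumption \ref{assum:nl_2}), $\h$ is a diffeomorphism and $\J_{\h}(\s)$ is invertible everywhere; the goal ``$\h$ is a composition of a permutation and a component-wise transformation'' is equivalent to showing that $\J_{\h}(\s)$ has exactly one nonzero entry per row and per column at every $\s$, with the permutation pattern constant over the connected domain.

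**Key steps**

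First I would use Assumption \ref{assum:nl_3} to transfer support information from $\mathcal{F}$ to $\hat{\mathcal{F}}$: for each $i$ and each $j \in \mathcal{F}_{i,:}$, the spanning condition $\operatorname{span}\{\J_{\f}(\s^{(\ell)})_{i,:}\} = \mathbb{R}^n_{\mathcal{F}_{i,:}}$ together with $[\J_{\f}(\s^{(\ell)})\mathrm{T}]_{j,:} \in \mathbb{R}^n_{\hat{\mathcal{F}}_{i,:}}$ forces the relevant rows of $\mathrm{T}$ to respect the sparsity pattern, yielding an inclusion relating $\hat{\mathcal{F}}$ and $\mathcal{F}$ through the support $\mathrm{T}$ of $\mathbf{T}(\s)$ — concretely that $\hat{\mathcal{F}}_{i,:} \supseteq$ (or $=$) a union/image of $\mathcal{F}$-rows under $\mathrm{T}$. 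Second, I would combine this with the cardinality bound $|\hat{\mathcal{F}}| \le |\mathcal{F}|$ (Assumption \ref{assum:nl_4}) to pin down that no ``spreading'' of support can occur: any inclusion that also cannot increase total cardinality must be tight, so $\mathrm{T}$ cannot mix columns in a way that enlarges supports, which already constrains $\mathbf{T}(\s)$ to be close to a generalized-permutation structure. Third — and this is where Structural Sparsity (Assumption \ref{assum:nl_5}) does the real work — for each source index $k$, I would intersect the rows $\mathcal{F}_{i,:}$ over $i \in \mathcal{C}_k$; since this intersection is the singleton $\{k\}$, and since $\mathrm{T}$ must map each $\mathcal{F}_{i,:}$ into $\hat{\mathcal{F}}_{i,:}$ consistently across all $i \in \mathcal{C}_k$, the only column of $\mathbf{T}(\s)$ that can be nonzero in the ``$k$-slot'' is a single one — forcing the $k$-th column of $\mathbf{T}(\s)$ (hence of $\J_{\h}(\s)^{-1}$, hence of $\J_{\h}(\s)$ after accounting for the inverse) to have a single nonzero entry. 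Running this over all $k$ gives that $\J_{\h}(\s)$ is a generalized permutation matrix at every $\s$; a connectedness/continuity argument then fixes the permutation to be the same everywhere, and integrating the resulting diagonal-up-to-permutation Jacobian shows $\h$ is component-wise invertible composed with that permutation.

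**Main obstacle**

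The hardest part will be Step 3: carefully turning the set-theoretic identity $\bigcap_{i\in\mathcal{C}_k}\mathcal{F}_{i,:}=\{k\}$ into a rigorous statement about the support of $\mathbf{T}(\s)$, because one must track how $\mathrm{T}$ acts on columns versus rows and ensure that the per-$i$ constraints from Assumption \ref{assum:nl_3} are genuinely simultaneous — i.e., that the \emph{same} $\mathrm{T}$ works for all $i$, so that intersecting over $\mathcal{C}_k$ is legitimate. A secondary subtlety is the bookkeeping between $\mathbf{T}(\s)$ and $\J_{\h}(\s)$: one needs $\J_{\h}(\s) = \J_{\hat{\f}}(\hat{\s})^{-1}\J_{\f}(\s)$ and must argue that the support constraints on $\mathbf{T}(\s)=\J_{\h}(\s)^{-1}$ transfer to $\J_{\h}(\s)$ itself (true for generalized permutation matrices, but this has to be invoked cleanly rather than assumed). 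Finally, I would need the domain of $\s$ to be connected so that the discrete permutation pattern is locally constant hence globally constant; if the paper's setup guarantees this (e.g., $\s$ supported on a connected set, or $\mathbb{R}^n$), the continuity argument closes the proof.
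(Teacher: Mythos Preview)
Your overall strategy matches the paper's proof: reduce to Jacobians via $\J_{\hat{\f}}(\hat{\s}) = \J_{\f}(\s)\,\mathbf{T}(\s)$, use Assumption~\ref{assum:nl_3} to obtain the row constraints $\mathrm{T}_{j,:} \in \mathbb{R}_{\hat{\mathcal{F}}_{i,:}}^{n}$ for all $j \in \mathcal{F}_{i,:}$ (equivalently $\{i\}\times\mathcal{T}_{j,:}\subset\hat{\mathcal{F}}$ whenever $(i,j)\in\mathcal{F}$), use the cardinality bound of Assumption~\ref{assum:nl_4}, and then invoke Structural Sparsity to force $\mathbf{T}(\s)$ to be a scaled permutation.

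There is, however, a genuine missing step in your Step~2. You write ``any inclusion that also cannot increase total cardinality must be tight,'' but the relation $\{i\}\times\mathcal{T}_{j,:}\subset\hat{\mathcal{F}}$ is \emph{not} by itself an inclusion between $\mathcal{F}$ and $\hat{\mathcal{F}}$, so there is nothing yet to which the cardinality bound can be applied. The paper supplies the bridge: since $\mathbf{T}(\s)$ is invertible, the Leibniz expansion of $\det \mathbf{T}(\s)$ yields a permutation $\sigma$ with $\sigma(j)\in\mathcal{T}_{j,:}$ for every $j$. Only then does the support relation give $\sigma(\mathcal{F})\subset\hat{\mathcal{F}}$, after which $|\hat{\mathcal{F}}|\le|\mathcal{F}|=|\sigma(\mathcal{F})|$ forces $\sigma(\mathcal{F})=\hat{\mathcal{F}}$. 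This equality is exactly what drives Step~3: if two distinct rows $j_1,j_2$ of $\mathrm{T}$ share a column $\sigma(j_3)$, Structural Sparsity applied to $j_1$ produces $i_3\in\mathcal{C}_{j_1}$ with $j_1\in\mathcal{F}_{i_3,:}$ but $j_3\notin\mathcal{F}_{i_3,:}$; the support relation then forces $(i_3,\sigma(j_3))\in\hat{\mathcal{F}}=\sigma(\mathcal{F})$, hence $(i_3,j_3)\in\mathcal{F}$, a contradiction. Without $\sigma$ you have no way to identify which columns of $\hat{\mathcal{F}}$ correspond to which columns of $\mathcal{F}$, and your Step~3 intersection argument cannot be made rigorous.

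Your two flagged obstacles are not real issues. The connectedness worry is unnecessary because the paper works with the support $\mathcal{T}$ of $\mathbf{T}(\cdot)$ as a matrix-valued \emph{function} (Definition~\ref{def:supp_matrix_function}): once this single set $\mathcal{T}$ is a permutation pattern, invertibility of $\mathbf{T}(\s)$ at each $\s$ forces that pattern to be realized everywhere, so the permutation is automatically global with no continuity argument needed. Similarly, your concern about ``the same $\mathrm{T}$ for all $i$'' dissolves: by definition $\mathrm{T}$ is any fixed matrix with support $\mathcal{T}$, and only its support pattern enters the argument.
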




\looseness=-1
Assumption \textit{\ref{assum:nl_3}} in a generic sense rules out a set of specific parameters to avoid ill-posed conditions such as the Jacobian is partially constant and is almost always to be satisfied asymptotically. Assumption \textit{\ref{assum:nl_4}} corresponds to incorporating a suitable sparsity regularization term into the estimating process with no restriction on the ground truth. It helps to find the estimated mixing function with the minimal $L_0$ norm among all functions that allow the model to perfectly fit the true distribution of sources. Thus, it indicates that we should estimate the mixing process by maximizing its sparsity. Our discussion will focus on the assumption of structural sparsity, which is the core of our theory.

We start with the intuition of the proposed direction on exploiting sparsity on the mixing process for identifiability. As discussed in previous works \citep{hyvarinen1999nonlinear}, there exist infinitely many ways to preserve the independence among variables after mixing them up, such as Darmois construction \citep{darmois1951analyse} and measure-preserving automorphism (MPA) \citep{hyvarinen1999nonlinear}. Besides, sources with non-Gaussian priors could be transferred to marginally independent Gaussians by trivial point-wise transformations \citep{hyvarinen2019nonlinear}. Thus, the existing strategy of exploiting independence and non-Gaussianity for identification fails in the nonlinear case. However, if the true structure is sparse enough, any alternative solution that produces any indeterminacy beyond component-wise transformations and permutations (e.g., mixtures or rotations) may correspond to a denser structure. In the light of that, instead of restricting the functional class (e.g., post nonlinear models \citep{taleb1999source} or conformal maps \citep{hyvarinen1999nonlinear}) or introducing auxiliary variables for extra structural dependencies between latent sources \citep{hyvarinen2019nonlinear, lachapelle2021disentanglement}, we leverage the sparsity pattern, i.e., the support of the Jacobian of the mixing function, to identify the sources.

\begin{wrapfigure}{r}{0.24\textwidth}
\vspace{-1.6em}
  \begin{center}
    \includegraphics[width=0.23\textwidth]{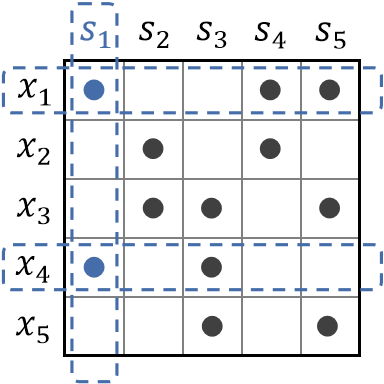}
  \end{center}
  \vspace{-0.6em}
  \caption{The structural sparsity assumption of Thm. \ref{thm:nl_identifiability_ss}, where the matrix represents $\operatorname{supp}(\J_\f(\mathbf{s}))$.}
  \label{fig:structured_sparsity_1}
\vspace{-1em}
\end{wrapfigure}

\looseness=-1
The structural sparsity assumption (Assumption \textit{\ref{assum:nl_5}} in Thm. \ref{thm:nl_identifiability_ss}) implies that, for every latent source $\mathbf{s}_i$, there exists a set of observed variable(s) such that $\mathbf{s}_i$ is the only latent source that participates in the generation of all observed variable(s) in the set. Graphically, for every latent source $\mathbf{s}_i$, there exists a set of observed variable(s) such that the intersection of their/its parent(s) is $\mathbf{s}_i$ (e.g., for $\s_1$ in Fig. \ref{fig:structured_sparsity_1}, there exist $\x_1$ and $\x_4$ such that the intersection of their parents is $\s_1$.). This encourages the connections to be sparse enough to make the disentanglement of each source based on structural conditions possible. Together with the sparsity regularization during estimation (Assumption \textit{\ref{assum:nl_4}})
and other mild conditions, Assumption \textit{\ref{assum:nl_5}} illustrates a graphical pattern of nonlinear ICA models that could be identified by exploiting structural sparsity. It indicates a structural criterion of the sparsity that is needed for identifiability. Beyond that, structural sparsity is also of practical interest for the interpretability of results \citep{zhang2009ica} and identification of other latent variable models \citep{rhodes2021local}. Meanwhile, various versions of Occam's razor (e.g., faithfulness \citep{spirtes2000causation}, minimality principle \citep{zhang2013comparison}, and frugality \citep{forster2020frugal}) are also fundamental to the identifiability of the underlying causal structure. Readers may refer to additional discussion in Sec. \ref{sec:disc_ap}.

With structural sparsity, Thm. \ref{thm:nl_identifiability_ss} shows the identifiability of nonlinear ICA while maintaining the standard mutually marginal independence assumption of sources. This is consistent with the original setting of ICA and plays an important role in a more general range of unsupervised learning tasks, compared to previous works relying on conditional independence given auxiliary variables. Moreover, one may modify Thm. \ref{thm:nl_identifiability_ss} to consider the generating process based on auxiliary variables by imposing similar restrictions on the mixing process. Different from previous works \citep{hyvarinen2019nonlinear, annoymous2022iclr}, this extension removes common restrictions on the required auxiliary variables, such as the number of distinct values (e.g., class labels), non-proportional variances, or sufficient variability. Therefore, although our main result (Thm. \ref{thm:nl_identifiability_ss}) focuses on nonlinear ICA with unconditional priors, it also provides potential insight into improving the flexibility of utilizing the auxiliary variable when it is available. As a trade-off, the additional assumptions on the sparsity might limit its usage. 

Besides, under additional assumptions, we could reduce the indeterminacy in Thm. \ref{thm:nl_identifiability_ss} and further identify sources up to a component-wise \textit{linear} transformation. Most works in nonlinear ICA focus on the identifiability up to a component-wise invertible transformation and a permutation \citep{hyvarinen2016unsupervised, khemakhem2020variational}, which is also called \emph{permutation-identifiability} in the literature of disentanglement \citep{lachapelle2021disentanglement}. This indeterminacy is trivial compared to the fundamental nonuniqueness of nonlinear ICA and is analogous to the indeterminacy involving rescaling and permutation in linear ICA \citep{hyvarinen1999nonlinear}. Recently, \cite{annoymous2022iclr} provided an identifiability result based on auxiliary variables that further reduces the indeterminacy of the component-wise nonlinear transformation to a linear one. Inspired by it but without auxiliary variables, we show conditions for the identifiability of nonlinear ICA with unconditional priors up to a component-wise linear transformation, with the proof in Appx. \ref{sec:proof_nl_identifiability_linear}. The reduced indeterminacy may give rise to a more informative disentanglement and open up the possibility for further improving the quality of recovery with only element-wise linear operations, such as the Hadamard product. 
\begin{restatable}{corollary}{NLIdentifiabilitylinear}\label{cor:nl_identifiability_linear}
Let the observed data be sampled from a nonlinear ICA model as defined in Eqs. \eqref{eq:independent_sources} and \eqref{eq:mixing_function}. Suppose the following assumptions hold:
\begin{enumerate}[label=\roman*.,ref=\roman*]
  \item The function $\h:= \hat{\f}^{-1} \circ \f $ is a composition of a component-wise \underline{invertible} transformation and a permutation.\label{assum:nl_linear_2}
  \item The mixing function $\f$ is volume-preserving.\label{assum:nl_linear_3}
  \item The source distribution $p_{\s}(\mathbf{s})$ is a factorial multivariate Gaussian.\label{assum:nl_linear_4}
  \end{enumerate}
Then $\h:= \hat{\f}^{-1} \circ \f $ is a composition of a component-wise \underline{linear} transformation and a permutation.
\end{restatable}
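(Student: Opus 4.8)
The plan is to analyze the map $\h = \hat{\f}^{-1}\circ\f$ directly. By Assumption \ref{assum:nl_linear_2}, after composing with a permutation we may assume $\h$ acts component-wise: $\hat{\s}_i = h_i(\s_{\pi(i)})$ for invertible scalar functions $h_i$, and since a permutation is volume-preserving it suffices to show each $h_i$ is affine. I would work with the change-of-variables identity relating the densities. Since $\f$ is volume-preserving (Assumption \ref{assum:nl_linear_3}), $p_{\x}(\x) = p_{\s}(\f^{-1}(\x))$ with unit Jacobian determinant; the same must hold for $\hat{\f}$ because $\hat{\f}$ and $\f$ induce the same observed distribution and, via $\h$, one checks $\hat{\f}$ is also volume-preserving (the Jacobian of $\h$ has determinant $\prod_i h_i'$, which must be $\pm 1$ identically — this already forces a global product constraint on the derivatives). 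Then equating $p_{\hat\s}(\hat\s) = \prod_i \hat p_{\hat\s_i}(\hat\s_i)$ with the pushforward of $p_{\s}$ under $\h$ gives, after taking logs,
\begin{equation}
\sum_{i=1}^n \log \hat p_{\hat\s_i}(h_i(\s_i)) = \sum_{i=1}^n \log p_{\s_i}(\s_i) - \sum_{i=1}^n \log |h_i'(\s_i)|,
\end{equation}
where I have absorbed the permutation. Crucially, the estimated sources $\hat\s_i$ are independent, but we do \emph{not} get to assume their marginals are Gaussian; that is the substance to be extracted.

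The key step is to differentiate this functional equation to kill the separable structure. Taking $\partial^2/\partial \s_i \partial \s_j$ for $i\neq j$ annihilates every term (each summand depends on a single $\s_i$), so the equation is automatically consistent and gives nothing — instead I would differentiate twice in a single variable $\s_i$ and use that $p_{\s_i}$ is Gaussian (Assumption \ref{assum:nl_linear_4}), so $\log p_{\s_i}(\s_i) = -\frac{(\s_i-\mu_i)^2}{2\sigma_i^2} + \text{const}$, whose second derivative is the constant $-1/\sigma_i^2$. Writing $g_i := \log \hat p_{\hat\s_i}$, the chain rule yields
\begin{equation}
g_i''(h_i(\s_i))\, h_i'(\s_i)^2 + g_i'(h_i(\s_i))\, h_i''(\s_i) = -\frac{1}{\sigma_i^2} - \frac{d^2}{d\s_i^2}\log|h_i'(\s_i)|.
\end{equation}
This is still one equation in the two unknown functions $h_i$ and $g_i$ per coordinate, so I need a second relation: differentiate the original (undifferentiated) identity once in $\s_i$ to get $g_i'(h_i(\s_i))h_i'(\s_i) = -(\s_i-\mu_i)/\sigma_i^2 - (\log|h_i'|)'$. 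The route I would follow is to \emph{reparametrize}: set $t = h_i(\s_i)$ and treat $\s_i = h_i^{-1}(t)$; then $\hat p_{\hat\s_i}(t)$ is an honest density and the constraint becomes a relation saying that the pushforward of a Gaussian under $h_i^{-1}$ is again such that the total log-density decomposes with the volume correction — but we also know from the global determinant constraint that $\prod_j h_j' \equiv \pm 1$. Combined with the freedom to rescale, the cleanest finish is: because $\hat\f^{-1}$ is only required to produce independent outputs and both $\f,\hat\f$ are volume-preserving, the map $\h$ is a volume-preserving component-wise bijection, and I claim a volume-preserving component-wise map pushing a nondegenerate Gaussian to a product distribution must be affine. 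Concretely, $\prod_i |h_i'(\s_i)| \equiv 1$ with each factor a function of a distinct variable forces each $|h_i'|$ to be constant (a product of univariate functions in separate variables is constant iff each is constant), hence $h_i' \equiv c_i$ with $\prod |c_i| = 1$, so every $h_i$ is affine.

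The main obstacle, and the step I would spend the most care on, is justifying that $\hat\f$ is volume-preserving — equivalently that $\det \J_\h \equiv \pm 1$ — without circularly assuming what we want. This should come from: $\f$ and $\hat\f$ yield the same $p_\x$; $\det\J_\f \equiv 1$; the independence of the $\hat\s_i$ together with the fact that $\h$ is already component-wise means $\det\J_\h = \prod_i h_i'(\s_i)$, a separable product; and matching the two factorized densities of $\s$ and $\hat\s$ through this map. If for some reason the determinant constraint only gives $\prod_i h_i'$ equal to a \emph{ratio} of density factors rather than a constant, I would instead push the argument through by differentiating $\log$ of that ratio identity and exploiting Gaussianity of $p_{\s_i}$ to show each $g_i$ is a quadratic — i.e., each $\hat p_{\hat\s_i}$ is itself Gaussian — after which affineness of $h_i$ follows because the only monotone maps between two Gaussians are affine. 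Either way, the payoff is that Gaussianity of the \emph{true} sources plus volume preservation rigidifies the component-wise indeterminacy down to a linear one; I would present the determinant-separability route as the main line and the "each estimated marginal is Gaussian" route as the fallback.
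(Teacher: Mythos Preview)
Your main line---once $\det\J_\h\equiv\pm 1$ is in hand, the factorization $\prod_i|h_i'(\s_i)|\equiv 1$ in separated variables forces each $|h_i'|$ constant, hence each $h_i$ affine---is correct and is \emph{not} the route the paper takes. The paper's proof simply assumes, from the estimation model described around Eq.~\eqref{eq:model3}, that the \emph{estimated} sources $\hat\s$ are also factorial Gaussian; it then writes both log-densities as quadratics, uses $\det\J_\h=1$ (justified by ``Assumption~\ref{assum:nl_linear_3} and the corresponding estimating process'', i.e.\ $\hat\f$ is volume-preserving by construction), and matches the per-coordinate equations $\theta'_{i,1}s_i+\theta'_{i,2}s_i^2+\log Z_i=\theta_{i,1}h_i(s_i)+\theta_{i,2}h_i(s_i)^2$ to conclude $h_i$ is linear. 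So the paper uses two hypotheses about the estimated model that are not literally written in the corollary statement: $\hat\f$ volume-preserving \emph{and} $p_{\hat\s}$ Gaussian.

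Your separability argument is more economical: it needs only that $\hat\f$ is volume-preserving and does not use Gaussianity of either $\s$ or $\hat\s$ at all---the conclusion ``component-wise $+$ volume-preserving $\Rightarrow$ affine'' is distribution-free. The ``main obstacle'' you flag (why is $\hat\f$ volume-preserving?) is not something you can derive from the stated hypotheses alone; the paper resolves it by fiat through the choice of estimating architecture (GIN), and you should do the same rather than trying to extract it from matching $p_\x$. Your fallback route (deriving Gaussianity of each $\hat p_{\hat\s_i}$) is closer in spirit to the paper's argument, but the paper never derives it---it assumes it---so that detour is unnecessary here.
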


\subsection{Removing Rotation Indeterminacy with Structural Sparsity}

Rotation indeterminacy is one of the major obstacles to the identifiability of ICA. Linear ICA exploits the maximization of the non-Gaussianity of the estimated sources (e.g., Kurtois \citep{hyvarinen1997fast}). As a direct result, the typical assumption is that at most one of the sources can have Gaussian distribution. In contrast, for the nonlinear case, a trivial point-wise function could transform sources to have any marginal distribution including Gaussian thus invalidating the effect of non-Gaussianity. Therefore, removing rotation indeterminacy remains critical for the identifiability in the nonlinear case. For instance, ``rotated-Gaussian'' MPA produces nonuniqueness in nonlinear ICA due to rotation indeterminacy \citep{hyvarinen1999nonlinear}. It first maps the distribution to an isotropic Gaussian, then applies a rotation and maps it back without affecting its original distribution. 

\looseness=-1
In the previous section, we have introduced the structural sparsity for the identifiability of nonlinear ICA. One may wonder whether it straightforwardly leads to the identifiability of linear ICA with Gaussian sources (Gaussian ICA), which was previously perceived to be impossible due to its rotation indeterminacy. The idea is simple--any rotation of the true mixing matrix $\A$ will be less sparse if $\A$ satisfies the proposed structural sparsity assumption. While the joint distribution of Gaussian sources stays invariant across rotations, the sparsity of the mixing matrix (i.e., $L_0$ norm) keeps changing. We first consider Assumption \textit{\ref{assum:nl_5}} in Thm. \ref{thm:nl_identifiability_ss} for Gaussian ICA as an extension. Because the Jacobian of linear mixing function is fixed, Assumption \textit{\ref{assum:nl_3}} of Thm. \ref{thm:nl_identifiability_ss} does not directly hold in the linear Gaussian case. As a result, we cannot directly apply Thm. \ref{thm:nl_identifiability_ss} here and therefore propose some alternative conditions:

\begin{restatable}{prop}{LIdentifiabilitySSOne}\label{prop:l_identifiability_ss_1}
Let the observed data be sampled from a linear ICA model defined in Eqs. \eqref{eq:independent_sources} and \eqref{eq:linear_mixing_function} with Gaussian sources. Suppose the following assumptions hold:
\begin{enumerate}[label=\roman*.,ref=\roman*]
  \item Mixing matrix $\A$ is invertible.\label{assum:linear_ss1_2}
  \item There exists a matrix $\hat{\A}$ s.t. for all $j \in \operatorname{supp}(\A)_{i,:}, \  \operatorname{supp}(\hat{\A}\A^{-1})_{j,:} \in \mathbb{R}_{\operatorname{supp}(\hat{\mathbf{A}})_{i,:}}^{n}$.\label{assum:linear_ss1_3}
  \item $|\operatorname{supp}(\hat{\A})| \leq |\operatorname{supp}(\A)| $.\label{assum:linear_ss1_4}
  \item \underline{(Structural Sparsity)} For all $k \in \{1, \ldots, n\}$, there exists $\mathcal{C}_{k}$ such that 
$$
        \bigcap_{i \in \mathcal{C}_{k}}
  \operatorname{supp}(\A_{i, :}) = \{k\}.
$$
  \label{assum:linear_ss1_5}
\end{enumerate}
\vspace{-1em}
Then $\hat{\A} = \A \D \mathbf{P}$, where $\D$ is a diagonal matrix and $\mathbf{P}$ is a column permutation matrix.
\end{restatable}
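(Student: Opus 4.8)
The plan is to exploit the fact that, even though Gaussianity removes the usual non-Gaussianity lever for identifiability, the structural sparsity of $\A$ still pins $\hat{\A}$ down up to rescaling and permutation, because every other admissible estimator has a strictly larger support. First I would reduce to a transfer matrix: $\hat{\A}$, being an estimated mixing matrix, is invertible (otherwise $\hat{\s}=\hat{\A}^{-1}\x$ could not be a non-degenerate independent-component representation of $\x$), so set $\mathbf{T}:=\A^{-1}\hat{\A}$ --- the linear counterpart of the matrix $\mathbf{T}$ of Thm.~\ref{thm:nl_identifiability_ss} --- so that $\hat{\A}=\A\mathbf{T}$ with $\mathbf{T}$ invertible. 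The goal then becomes to prove that $\mathbf{T}$ is a generalized permutation matrix, i.e.\ has exactly one nonzero entry per row and per column, whence $\hat{\A}=\A\mathbf{T}=\A\D\mathbf{P}$. (Gaussianity itself never enters the combinatorial core below; it is precisely what makes the statement non-trivial, as the feasible set of independent-component estimators now contains all ``rotations'' of $\A$, not only $\A\D\mathbf{P}$.)

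Next I would extract a clean support identity. From $\hat{\A}=\A\mathbf{T}$ one automatically has $\supp(\hat{\A})_{i,:}\subseteq\bigcup_{j\in\supp(\A)_{i,:}}\supp(\mathbf{T})_{j,:}$ for every $i$, and Assumption~\ref{assum:linear_ss1_3} is exactly the ``no-cancellation'' condition providing the reverse inclusion, so that $\supp(\hat{\A})_{i,:}=\bigcup_{j\in\supp(\A)_{i,:}}\supp(\mathbf{T})_{j,:}$ for all $i$. Re-indexing by columns, with $C_l:=\supp(\mathbf{T})_{:,l}$, this yields $\supp(\hat{\A})_{:,l}=\bigcup_{k\in C_l}\supp(\A)_{:,k}$, hence
\begin{equation*}
\lvert\supp(\hat{\A})\rvert=\sum_{l=1}^{n}\Bigl\lvert\,\bigcup_{k\in C_l}\supp(\A)_{:,k}\,\Bigr\rvert.
\end{equation*}
In parallel, I would show that Assumption~\ref{assum:linear_ss1_5} forces the column supports of $\A$ to form an antichain: for $k_1\neq k_2$, taking $\mathcal{C}_{k_1}$ with $\bigcap_{i\in\mathcal{C}_{k_1}}\supp(\A)_{i,:}=\{k_1\}$, there is $i_1\in\mathcal{C}_{k_1}$ with $k_2\notin\supp(\A)_{i_1,:}$, so $i_1\in\supp(\A)_{:,k_1}\setminus\supp(\A)_{:,k_2}$ and therefore $\supp(\A)_{:,k_1}\not\subseteq\supp(\A)_{:,k_2}$.

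Then I would run the counting argument. Since $\mathbf{T}$ is invertible, a nonzero term in its determinant expansion gives a permutation $\sigma$ with $\mathbf{T}_{\sigma(l),l}\neq 0$ for all $l$, i.e.\ $\sigma(l)\in C_l$; hence $\lvert\bigcup_{k\in C_l}\supp(\A)_{:,k}\rvert\geq\lvert\supp(\A)_{:,\sigma(l)}\rvert$ and, summing over $l$, $\lvert\supp(\hat{\A})\rvert\geq\sum_{l}\lvert\supp(\A)_{:,\sigma(l)}\rvert=\lvert\supp(\A)\rvert$. By Assumption~\ref{assum:linear_ss1_4} this must be an equality, so $\lvert\bigcup_{k\in C_l}\supp(\A)_{:,k}\rvert=\lvert\supp(\A)_{:,\sigma(l)}\rvert$ for every $l$. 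If some $\lvert C_l\rvert\geq 2$, pick $k'\in C_l\setminus\{\sigma(l)\}$; the antichain property gives $\supp(\A)_{:,k'}\not\subseteq\supp(\A)_{:,\sigma(l)}$, so $\lvert\bigcup_{k\in C_l}\supp(\A)_{:,k}\rvert\geq\lvert\supp(\A)_{:,\sigma(l)}\rvert+1$, a contradiction. Hence $\lvert C_l\rvert=1$ for every $l$, and an invertible matrix with a single nonzero per column is a generalized permutation: $\mathbf{T}=\mathbf{P}_0\D_0$ for a permutation $\mathbf{P}_0$ and a diagonal $\D_0$, giving $\hat{\A}=\A\,(\mathbf{P}_0\D_0\mathbf{P}_0^{\T})\,\mathbf{P}_0=\A\D\mathbf{P}$.

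The main obstacle is getting the counting step to bite: the local bound $\lvert\bigcup_{k\in C_l}\supp(\A)_{:,k}\rvert\geq\lvert\supp(\A)_{:,k}\rvert$ is vacuous unless one can line up the index families $\{C_l\}_l$ with the columns of $\A$, which is why the invertibility of $\mathbf{T}$ (to produce the system of distinct representatives $\sigma$) and Assumption~\ref{assum:linear_ss1_3} (to upgrade the automatic inclusion to the equality $\supp(\hat{\A})_{:,l}=\bigcup_{k\in C_l}\supp(\A)_{:,k}$, ruling out spurious sparsification through cancellations) are both indispensable; once they are in place, the antichain consequence of structural sparsity forces every $\lvert C_l\rvert=1$, and one just has to check that an invertible, one-nonzero-per-column matrix is indeed a generalized permutation.
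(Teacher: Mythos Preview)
Your argument is correct and is essentially the paper's own proof: both set $\mathbf{T}=\A^{-1}\hat{\A}$, read Assumption~\ref{assum:linear_ss1_3} as the no-cancellation relation $\supp(\mathbf{T})_{j,:}\subseteq\supp(\hat{\A})_{i,:}$ whenever $j\in\supp(\A)_{i,:}$, extract a permutation $\sigma$ from $\det\mathbf{T}\neq 0$, use Assumption~\ref{assum:linear_ss1_4} to turn the resulting inclusion into an equality, and finish by contradiction via structural sparsity. The only cosmetic difference is that you first isolate the antichain property of the column supports of $\A$ and run a per-column cardinality count, whereas the paper establishes the set identity $\sigma(\supp\A)=\supp\hat{\A}$ and chases a single offending index pair; the underlying combinatorics is identical.
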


The proof is provided in Appx. \ref{sec:proof_l_identifiability_ss_1}. Similar to Assumption \textit{\ref{assum:nl_3}} of Thm. \ref{thm:nl_identifiability_ss}, Assumption \textit{\ref{assum:linear_ss1_3}} of Prop. \ref{prop:l_identifiability_ss_1} excludes a specific set of parameters that makes structure-based identification ill-posed.

\textbf{The undercomplete case.} \ \ 
For the full identifiability of nonlinear ICA, it is noteworthy that the invertibility of the mixing function is technically essential in our result (Thm. \ref{thm:nl_identifiability_ss}). This invertibility implies that the number of observed variables is equal to that of sources. Although that setting is rather common in the literature, especially for the theory of nonlinear cases \citep{hyvarinen1999nonlinear,hyvarinen2019nonlinear}, undercomplete ICA (i.e., there are more observed variables), may be of more practical interest and more challenging \citep{joho2000overdetermined}. Therefore, we develop an alternative structural sparsity condition for the undercomplete case. As a trade-off, in the nonlinear case, the structural condition that removes the requirement of invertibility could only deal with the rotation indeterminacy. Thus we start from Gaussian ICA to introduce the condition.
\begin{definition} \label{overlap}
Given a matrix $\mathbf{S} \in \mathbb{R}^{m \times n}$, we define the function $\operatorname{overlap}:\mathbb{R}^{m \times n}\rightarrow \{0,1\}^{m\times n}$ as
\[
(\operatorname{overlap}(\mathbf{S}))_{ij}=\begin{cases}
1 & \text{ if } \mathbf{S}_{ij}=1 \text{ and it is not the only nonzero entry in $\mathbf{S}_{i,:}$}  \\
0 & \text{ otherwise.}
\end{cases}
\]
\end{definition}

\begin{restatable}{theorem}{LIdentifiabilitySSTwo}\label{thm:l_identifiability_ss_2}
Let the observed data be sampled from a linear ICA model defined in Eqs. \eqref{eq:independent_sources} and \eqref{eq:linear_mixing_function} with Gaussian sources. Differently, the number of observed variables (denoted as $m$) could be larger than that of the sources $n$, i.e., $m \geq n$. Suppose the following assumptions hold:
\begin{enumerate}[label=\roman*.,ref=\roman*]
  \item The nonzero coefficients of the mixing matrix $\mathbf{A}$ are randomly drawn from a distribution that is absolutely continuous with respect to Lebesgue measure.\label{assum:linear_ss2_2}
  \item The estimated mixing matrix $\hat{\mathbf{A}}$ has the minimal $L_0$ norm during estimation.\label{assum:linear_ss2_3}
  \item  \underline{(Structural Sparsity)} Given $\mathcal{C}\subseteq \{1,2,\dots,n\}$ where $|\mathcal{C}|>1$, let $\A_{\mathcal{C}} \in \mathbb{R}^{m \times \left|\mathcal{C}\right|}$ represents a submatrix of $\mathbf{A} \in \mathbb{R}^{m \times n}$ consisting of columns with indices $\mathcal{C}$. Then, for all $k\in\mathcal{C}$, we have
\[
    \left|\underset{k^{\prime} \in \mathcal{C}}{\bigcup}\operatorname{supp}(\mathbf{A}_{k^{\prime}})\right| -  \operatorname{rank}(\operatorname{overlap}(\A_{\mathcal{C}})) > \left|\operatorname{supp}(\mathbf{A}_{k})\right|.
\] \label{assum:linear_ss2_4}
\end{enumerate}
\vspace{-1em}
Then $\hat{\A} = \A \D \mathbf{P}$ with probability one, where $\D$ is a diagonal matrix and $\mathbf{P}$ is a column permutation matrix.
\end{restatable}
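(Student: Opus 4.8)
The plan is to reduce the estimation ambiguity to an orthogonal transformation of the columns of $\A$, and then to show that under the structural sparsity condition any such transformation other than a generalized permutation strictly increases the number of nonzeros, contradicting the minimality in Assumption~\textit{\ref{assum:linear_ss2_3}}. First I would normalize the sources to be standard Gaussian, absorbing their variances into $\A$; this leaves $\operatorname{supp}(\A)$ — and hence Assumption~\textit{\ref{assum:linear_ss2_4}} — unchanged, and the freedom regained by this rescaling is exactly what the diagonal matrix $\D$ in the conclusion records. The observed law is then pinned down by $\A\A^{\T}$. By Assumption~\textit{\ref{assum:linear_ss2_2}}, $\A\in\mathbb{R}^{m\times n}$ has full column rank with probability one, so any estimate matching the distribution obeys $\hat{\A}\hat{\A}^{\T}=\A\A^{\T}$, which forces $\hat{\A}=\A\mathbf{R}$ for some $n\times n$ orthogonal $\mathbf{R}$ (the estimate in the original parametrization differs only by a column rescaling and so has the same support). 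It therefore suffices to prove that $|\operatorname{supp}(\A\mathbf{R})|$ is minimal over orthogonal $\mathbf{R}$ only when $\mathbf{R}$ is a generalized permutation matrix, since then $\hat{\A}=\A\D\mathbf{P}$.

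\textbf{Per-column lower bound.} Fix a column index $j$ and set $\mathcal{C}:=\operatorname{supp}(\mathbf{R}_{:,j})$. Writing $(\A\mathbf{R})_{:,j}=\sum_{k\in\mathcal{C}}\mathbf{R}_{kj}\A_{:,k}$, a row $i$ of this vector can vanish only if $i$ misses every $\operatorname{supp}(\A_{:,k})$, $k\in\mathcal{C}$, or else lies in at least two of them (an ``overlap row'') and the generic linear combination happens to cancel; a row meeting exactly one $\operatorname{supp}(\A_{:,k})$ never cancels because $\mathbf{R}_{kj}\neq0$. Hence $|\operatorname{supp}((\A\mathbf{R})_{:,j})| = |\bigcup_{k\in\mathcal{C}}\operatorname{supp}(\A_{:,k})| - z_j$, where $z_j$ is the number of overlap rows that cancel. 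The key lemma to establish is that, with probability one over $\A$ and \emph{simultaneously for all orthogonal $\mathbf{R}$}, $z_j\le \operatorname{rank}(\operatorname{overlap}(\A_{\mathcal{C}}))$ whenever $|\mathcal{C}|>1$; together with Assumption~\textit{\ref{assum:linear_ss2_4}} this gives $|\operatorname{supp}((\A\mathbf{R})_{:,j})| > |\operatorname{supp}(\A_{:,k})|$ for every $k\in\mathcal{C}$ as soon as $|\mathcal{C}|>1$, whereas for $|\mathcal{C}|=\{k\}$ trivially $|\operatorname{supp}((\A\mathbf{R})_{:,j})| = |\operatorname{supp}(\A_{:,k})|$.

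\textbf{Global count.} Since $\mathbf{R}$ is invertible, expanding $\det\mathbf{R}$ produces a permutation $\sigma$ with $\mathbf{R}_{\sigma(j),j}\neq0$, i.e.\ $\sigma(j)\in\mathcal{C}_j:=\operatorname{supp}(\mathbf{R}_{:,j})$, for every $j$. Summing the per-column bound over $j$ and using that $\sigma$ is a bijection yields $|\operatorname{supp}(\A\mathbf{R})| \ge |\operatorname{supp}(\A)| + \#\{j : |\operatorname{supp}(\mathbf{R}_{:,j})|>1\}$. As $\A$ itself realizes the data distribution, Assumption~\textit{\ref{assum:linear_ss2_3}} forces $|\operatorname{supp}(\hat{\A})|\le|\operatorname{supp}(\A)|$; hence every column of $\mathbf{R}$ has a single nonzero, $\mathbf{R}$ is a generalized permutation, and $\hat{\A}=\A\D\mathbf{P}$ with probability one.

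\textbf{Main obstacle.} The crux is the key lemma of the second step: controlling the number of cancelling overlap rows \emph{uniformly over the continuum of orthogonal $\mathbf{R}$}. For a single fixed $\mathbf{R}$ the bound is immediate from genericity of the nonzero entries of $\A$, but here $\mathbf{R}$ sweeps out the orthogonal group $\mathrm{O}(n)$. I would union-bound over the finitely many candidate subsets $\mathcal{Z}$ of overlap rows with $|\mathcal{Z}|=\operatorname{rank}(\operatorname{overlap}(\A_{\mathcal{C}}))+1$ and show that for generic $\A$ the submatrix $\A_{\mathcal{Z},\mathcal{C}}$ admits no null vector with all coordinates nonzero: combinatorially (Hall's theorem / term rank), having more rows than the generic rank of the overlap pattern forces a subfamily supported on too few columns, so a generic $\A$ pins a coordinate of any putative null vector to zero; algebraically, the null space is a linear subspace, and a subspace meeting no all-nonzero vector must lie in a coordinate hyperplane. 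A minor but genuine point to fix along the way is that $\operatorname{rank}(\operatorname{overlap}(\A_{\mathcal{C}}))$ here must mean the rank of the generically-valued overlap submatrix (equivalently the term rank of its support pattern), not the rank of the $0/1$ indicator matrix, since only the former makes the per-column bound correct.
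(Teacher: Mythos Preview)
Your approach is essentially the paper's: both argue per column that any linear combination of two or more columns of $\A$ with all-nonzero coefficients has strictly more nonzeros than each constituent column (this is exactly the inequality in Assumption~\textit{\ref{assum:linear_ss2_4}}), and then aggregate via a matching between columns of $\mathbf{R}$ and columns of $\A$ to conclude that any non-(generalized-)permutation $\mathbf{R}$ strictly increases the total $L_0$ count, contradicting Assumption~\textit{\ref{assum:linear_ss2_3}}. The paper phrases the orthogonal freedom directly as a combinatorial optimization over $\mathbf{U}\mathbf{U}^{\T}=\mathbf{I}$ rather than deriving it from the Gaussian covariance identity $\hat{\A}\hat{\A}^{\T}=\A\A^{\T}$ as you do, and for the global step it explicitly constructs a competing permutation matrix $\tilde{\mathbf{U}}$ (picking for each ``bad'' column $j'$ a row index $i'\in\mathcal{I}_{j'}$ that ``uniquely corresponds'' to it, appealing to invertibility of $\hat{\mathbf{U}}$) instead of invoking the Leibniz expansion of $\det\mathbf{R}$ to extract a permutation $\sigma$; but the combinatorial content is the same.

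Where you go beyond the paper is precisely the step you flag as the \emph{main obstacle}. The paper simply asserts that ``$\operatorname{rank}(\operatorname{overlap}(\A_{\mathcal{I}_{j'}}))$ represents the maximal number of rows in which all nonzero entries can be possibly cancelled out'' and attributes this to Assumption~\textit{\ref{assum:linear_ss2_2}}; it does not address the uniformity over all orthogonal $\mathbf{R}$, nor does it disambiguate whether ``rank'' means the rank of the $0/1$ indicator produced by Definition~\ref{overlap} or the generic rank of the real-valued overlap submatrix (your observation that only the latter makes the bound correct is well taken). So the obstacle you identify is genuine, but it is a gap the paper itself leaves open rather than a divergence from its argument; your proposed fix --- union-bounding over the finitely many pairs $(\mathcal{C},\mathcal{Z})$ and using a term-rank/Hall-type argument to show that for generic $\A$ no all-nonzero null vector exists once $|\mathcal{Z}|$ exceeds the rank --- is a reasonable way to make the paper's assertion rigorous.
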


\begin{wrapfigure}{r}{0.16\textwidth}
\vspace{-1.8em}
  \begin{center}
    \includegraphics[width=0.16\textwidth]{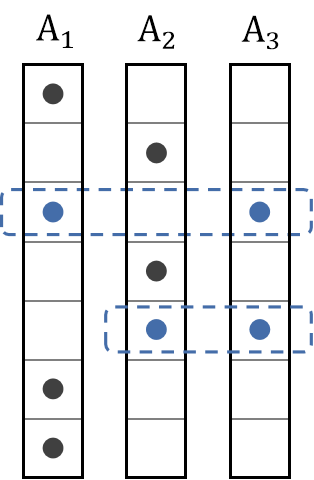}
  \end{center}
  \vspace{-0.6em}
  \caption{Assumption \textit{\ref{assum:linear_ss2_4}} of Thm. \ref{thm:l_identifiability_ss_2}.}
  \label{fig:structured_sparsity_2}
\vspace{-4.5em}
\end{wrapfigure}
The proof is provided in Appx. \ref{sec:proof_l_identifiability_ss_2}. Intuitively, Assumption \textit{\ref{assum:linear_ss2_4}} in Thm. \ref{thm:l_identifiability_ss_2} encourages the variability between the influences of each source. For each source, its influence on the observations should be as distinctive as possible, which is correlated with the sparsity of the mixing matrix.  Fig. \ref{fig:structured_sparsity_2} is an illustration of the assumption. In that case, $k = 1$ and $\mathcal{C} = \{1,2,3\}$. Let $\A_{\mathcal{C}}$ denotes the submatrix shown in the figure, where $\left|\underset{k^{\prime} \in \mathcal{C}}{\bigcup}\operatorname{supp}(\mathbf{A}_{k^{\prime}})\right| = 7$. Blue dotted square denotes $\operatorname{overlap}(\A_{\mathcal{C}})$ and we have $\operatorname{rank}(\operatorname{overlap}(\A_{\mathcal{C}})) = 2$. Thus, $\left|\underset{k^{\prime} \in \mathcal{C}}{\bigcup}\operatorname{supp}(\mathbf{A}_{k^{\prime}})\right| - \operatorname{rank}(\operatorname{overlap}(\A_{\mathcal{C}})) = 5$ (the number of black dots), which is larger than $\left|\operatorname{supp}(\mathbf{A}_{1})\right| = 4$. For the nonlinear case, we prove that it could help to distinguish spurious solutions due to the rotation indeterminacy, i.e., the ``rotated-Gaussian'' MPA (Defn. 2.5 in \cite{gresele2021independent}). \cite{gresele2021independent} prove that this class of MPAs could be ruled out with assumptions of conformal maps, non-Gaussianity and orthogonality of the Jacobian of the mixing function. Differently, we address it with the proposed structural sparsity condition (Assumption \textit{\ref{assum:linear_ss2_4}} in Thm. \ref{thm:l_identifiability_ss_2}). The corresponding theorem for the nonlinear case, with its proof given in Appx. \ref{sec:proof_nl_identifiability_ss_2}, is as follows:

\begin{restatable}{theorem}{NLIdentifiabilitySSTwo}\label{thm:nl_identifiability_ss_2}
Given a nonlinear ICA model defined in Eqs. \eqref{eq:independent_sources} and \eqref{eq:mixing_function}, where $\f$ is the true mixing function. Consider $\hat{\f} = \f\circ\G^{-1}\circ\U\circ\G$, where $\G$ denotes an invertible Gaussianization\footnote{One example is described in \citep{gresele2021independent}, i.e., a composition of the element-wise CDFs of a smooth factorised density and a Gaussian, respectively.} that maps the distribution to an standard isotropic (rotation-invariant) Gaussian, $\U$ denotes a rotation, and $\G^{-1}$ maps the distribution back to that before applying $\hat{\U}\circ\G$. If Assumptions \textit{\ref{assum:linear_ss2_2}}, \textit{\ref{assum:linear_ss2_3}} and \textit{\ref{assum:linear_ss2_4}} of Thm. \ref{thm:l_identifiability_ss_2} are satisfied by replacing $\A$ with $\J_\f(\mathbf{s})$ and $\hat{\A}$ with $\J_{\hat{\f}}(\mathbf{s})$, then function $\h:= \hat{\f}^{-1} \circ \f $ is a composition of a component-wise invertible transformation and a permutation with probability one.
\end{restatable}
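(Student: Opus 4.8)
The plan is to reduce the nonlinear statement to the linear result of Theorem~\ref{thm:l_identifiability_ss_2} applied pointwise to the Jacobian. First I would compute the Jacobian of the candidate spurious solution. Writing $\hat{\f} = \f\circ\G^{-1}\circ\U\circ\G$ and using the chain rule, we have $\J_{\hat{\f}}(\s) = \J_{\f}(\G^{-1}(\U(\G(\s)))) \cdot \J_{\G^{-1}}(\U(\G(\s))) \cdot \U \cdot \J_{\G}(\s)$. Since $\G$ is an element-wise (coordinate-wise) map, both $\J_{\G}(\s)$ and $\J_{\G^{-1}}(\cdot)$ are diagonal matrices with positive diagonal entries (strictly monotone CDF-type transforms). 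Because $\G$ sends the source density to an isotropic Gaussian and $\G^{-1}$ sends it back, and a rotation preserves the isotropic Gaussian, I would argue that the two diagonal matrices and the rotation $\U$ compose into a single matrix $\hat{\U}(\s)$ that is orthogonal up to an overall positive scalar, or more precisely that the relation $\J_{\hat{\f}}(\s) = \J_{\f}(\s')\,\M(\s)$ holds for some matrix $\M(\s)$ of the same zero-pattern structure as a scaled rotation, where $\s' = \G^{-1}(\U(\G(\s)))$. The key point is that the spurious $\hat{\f}$ cannot be trivial precisely when $\U$ is a nontrivial rotation, i.e.\ not a signed permutation.

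Next I would invoke the structural-sparsity mechanism. The assumption is that Assumptions~\textit{\ref{assum:linear_ss2_2}}, \textit{\ref{assum:linear_ss2_3}}, \textit{\ref{assum:linear_ss2_4}} of Theorem~\ref{thm:l_identifiability_ss_2} hold with $\A$ replaced by $\J_{\f}(\s)$ and $\hat{\A}$ replaced by $\J_{\hat{\f}}(\s)$. The $L_0$-minimality assumption (\textit{\ref{assum:linear_ss2_3}}) says $|\supp(\J_{\hat{\f}}(\s))| \le |\supp(\J_{\f}(\s))|$. I would then apply the conclusion of Theorem~\ref{thm:l_identifiability_ss_2} at a fixed generic point $\s$ to deduce $\J_{\hat{\f}}(\s) = \J_{\f}(\s)\,\D\mathbf{P}$ for a diagonal $\D$ and permutation $\mathbf{P}$ --- but one must be careful that the two Jacobians are evaluated at \emph{different} source points ($\s$ versus $\s'$). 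The cleanest route is: from the pointwise linear-algebra fact that a rotation which is not a signed permutation strictly increases the $L_0$ count (this is exactly the content of the argument underlying Theorem~\ref{thm:l_identifiability_ss_2} under Assumption~\textit{\ref{assum:linear_ss2_4}}), conclude that $\U$ must itself be a signed permutation matrix, and hence $\hat{\f} = \f\circ\G^{-1}\circ\U\circ\G$ with $\U$ a signed permutation. Then $\h = \hat{\f}^{-1}\circ\f = \G^{-1}\circ\U^{-1}\circ\G$, which, since $\G$ is component-wise and $\U^{-1}$ is a signed permutation, is manifestly a composition of a component-wise invertible map and a permutation.

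The main obstacle I anticipate is handling the dependence of everything on the evaluation point $\s$: the Jacobian identity couples $\J_{\hat{\f}}$ at $\s$ with $\J_{\f}$ at the rotated point $\G^{-1}(\U(\G(\s)))$, so one cannot naively read off "$\hat{\A} = \A\D\mathbf{P}$" at a single point. I would deal with this by noting that $\G$, $\G^{-1}$ are diffeomorphisms and $\U$ is a fixed linear isometry of the Gaussianized space, so the support structure of $\J_{\hat{\f}}(\s)$ is governed by $\U$ composed with the (diagonal, hence support-preserving) Jacobians of $\G^{-1}$ and $\G$; thus the relevant comparison of $L_0$ norms is between $\supp(\J_{\f})$ and $\supp(\J_{\f}\,\hat{\U})$ for the induced scaled-orthogonal $\hat{\U}$, and Assumption~\textit{\ref{assum:linear_ss2_4}} (together with the absolute continuity in \textit{\ref{assum:linear_ss2_2}}, which guarantees no accidental cancellations with probability one) forces $\hat{\U}$ --- and therefore $\U$ --- to be a signed permutation. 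A secondary technical point is confirming that the ``Gaussianization back'' map $\G^{-1}$ in the definition is consistent, i.e.\ that applying $\U$ to an isotropic Gaussian leaves it isotropic so that $\G^{-1}$ is well-defined; this is immediate from rotation-invariance of the standard Gaussian. Once the signed-permutation conclusion for $\U$ is in hand, the final identification of $\h$ is routine.
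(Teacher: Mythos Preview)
Your proposal is correct and follows essentially the same route as the paper: both compute the chain-rule decomposition $\J_{\hat{\f}}(\s)=\J_{\f}(\cdot)\,\D_1(\s)\,\U\,\D_2(\s)$ with $\D_1,\D_2$ diagonal (from the element-wise Gaussianization), then reuse the $L_0$-minimality argument underlying Theorem~\ref{thm:l_identifiability_ss_2} to force the effective rotation to be a (signed) permutation, after which $\h=\G^{-1}\circ\U^{-1}\circ\G$ is manifestly component-wise plus permutation. You in fact flag a point the paper glosses over---that $\J_{\f}$ appears evaluated at the shifted point $\s'=\G^{-1}(\U(\G(\s)))$ rather than at $\s$---and your resolution via the support (which, per Definition~\ref{def:supp_matrix_function}, is taken over all arguments and is preserved by the diagonal factors) is exactly the right way to make the paper's step $\J_{\f}(\G^{-1}\hat{\U}\G(\s))\rightsquigarrow\J_{\f}(\s)$ rigorous.
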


\looseness=-1
Besides removing rotation indeterminacy, Thm. \ref{thm:nl_identifiability_ss_2} also provides an extra insight for the full identifiability of nonlinear ICA, because most previous works assume that the nonlinear mixing function must be invertible \citep{hyvarinen2016unsupervised, hyvarinen2019nonlinear} even with the help of auxiliary variables.

\section{Alternative formulations of simplicity}\label{sec:ms}

\looseness=-1
In the previous section, we have presented theoretical results based on structural sparsity. At the same time, the principle of simplicity clearly does not only reflect in a structural way, and there exists many more alternative formulations of it. Thus, we explore an alternative intuition for identification, which has been shown to be effective in certain cases empirically. Specifically, one may leverage the independence (in a non-statistical sense) among the influences from different sources to the observations to achieve identifiability of nonlinear ICA. This implies sparse interactions between the individual influencing processes and could be viewed as an analogous concept of ``sparsity'' w.r.t. influences.

To explore alternative formulations, we would like to consider several intuitive constraints. These intuitions might be helpful as demonstrated empirically but no theoretical guarantee for the full identifiability has been given yet. 
For estimation, we sample the estimated sources $\hat{\s}$ from a multivariate Gaussian:
\begin{align}\label{eq:model3}
p_{\hat{\s}}(\hat{\mathbf{s}}) &=\prod_{i=1}^{n} \frac{1}{Z_{i}} \exp \left(-\theta_{i, 1} \hat{s}_{i}-\theta_{i, 2} \hat{s}_{i}^{2}\right),
\end{align}
where $Z_{i}>0$ is a constant. The sufficient statistics $\theta_{i, 1}=-\frac{\mu_{i}}{\sigma_{i}^{2}}$ and $\theta_{i, 2}=\frac{1}{2 \sigma_{i}^{2}}$ are assumed to be linearly independent. We constraint the variances $\sigma_{i}^{2}$ to be distinct without loss of generality. Based on these, the following constraints might be helpful for the identification: (1) The influence of each source on the observed variables is independent of each other, i.e., each partial derivative $\partial \mathbf{f} / \partial s_{i}$ is independent of the other sources and their influences in a non-statistical sense; (2) The Jacobian determinant of mixing function can be factorized as $\det(\mathbf{J}_{\mathbf{f}}(\s)) = \prod_{i=1}^{n} y_i(s_i)$, where $y_i$ is a function that depends only on $s_i$. Note that volume-preserving transformation is a special case when $y_i(s_i) = 1,i=1,\dots,n$; (3) During estimation, the columns of the Jacobian of the estimated unmixing function are regularized to be mutually orthogonal and with equal euclidean norm, which is a regularization during estimation and thus puts no restriction on the true generating process. We can achieve it by optimizing an objective function with a designed regularization term, of which the property is as follows (proof in Appx. \ref{sec:proof_reg}):
\begin{restatable}{prop}{reg}\label{prop:reg}
The following inequality holds
\begin{equation}
    n\log\left(\frac{1}{n} \sum_{i=1}^{n}\left\|\frac{\partial \hat{\mathbf{f}}^{-1}}{\partial x_{i}}\right\|_2\right)-\log \left|\det(\mathbf{J}_{\hat{\mathbf{f}}^{-1}}(\mathbf{x}))\right| \geq 0, \label{eq:reg_ineq}
\end{equation}
with equality iff. $\mathbf{J}_{\hat{\mathbf{f}}^{-1}}(\mathbf{x}) = \mathbf{O}(\mathbf{x})\lambda(\mathbf{x})$, where $\mathbf{O}(\mathbf{x})$ is an orthogonal matrix and $\lambda(\mathbf{x})$ is a scalar.
\end{restatable}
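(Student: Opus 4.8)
The plan is to reduce the claim to two classical inequalities about a square matrix and its columns. Write $\mathbf{J}\coloneqq\mathbf{J}_{\hat{\mathbf{f}}^{-1}}(\mathbf{x})$ and let $v_i\coloneqq\partial\hat{\mathbf{f}}^{-1}/\partial x_i\in\mathbb{R}^{n}$ denote its $i$th column, so the quantity inside the first logarithm in \eqref{eq:reg_ineq} is exactly $\frac{1}{n}\sum_{i=1}^{n}\|v_i\|_2$. Since $\hat{\mathbf{f}}^{-1}$ is a diffeomorphism, $\mathbf{J}$ is invertible at every $\mathbf{x}$; hence $|\det\mathbf{J}|>0$ and each $\|v_i\|_2>0$, so both logarithms are finite and the statement is well posed. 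Exponentiating, inequality \eqref{eq:reg_ineq} is equivalent to
\[
  \left(\frac{1}{n}\sum_{i=1}^{n}\|v_i\|_2\right)^{n}\;\ge\;\bigl|\det\mathbf{J}\bigr| .
\]

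Next I would invoke two ingredients. First, Hadamard's inequality for a determinant in terms of its columns, $|\det\mathbf{J}|\le\prod_{i=1}^{n}\|v_i\|_2$, which one can prove, for instance, by applying Gram--Schmidt orthogonalization to the columns of $\mathbf{J}$ and observing that this does not change $|\det\mathbf{J}|$ while it can only decrease the column norms; equality holds if and only if the columns $v_1,\dots,v_n$ are mutually orthogonal (all are nonzero here). Second, the AM--GM inequality applied to the nonnegative reals $\|v_1\|_2,\dots,\|v_n\|_2$, namely $\prod_{i=1}^{n}\|v_i\|_2\le\left(\frac{1}{n}\sum_{i=1}^{n}\|v_i\|_2\right)^{n}$, with equality if and only if $\|v_1\|_2=\dots=\|v_n\|_2$. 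Chaining the two inequalities yields the displayed bound, and taking logarithms gives \eqref{eq:reg_ineq}. (Equivalently, one may combine Hadamard's inequality with Jensen's inequality for the concave function $\log$.)

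Finally, for the equality case: equality in \eqref{eq:reg_ineq} forces equality in both Hadamard's and the AM--GM inequalities simultaneously, i.e.\ the columns of $\mathbf{J}$ are pairwise orthogonal and share a common norm $\lambda(\mathbf{x})\coloneqq\|v_1\|_2>0$. This says precisely that $\mathbf{J}^{\T}\mathbf{J}=\lambda(\mathbf{x})^{2}\mathbf{I}$, equivalently $\mathbf{J}=\mathbf{O}(\mathbf{x})\lambda(\mathbf{x})$ with $\mathbf{O}(\mathbf{x})\coloneqq\lambda(\mathbf{x})^{-1}\mathbf{J}$ orthogonal; the converse implication is an immediate computation. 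As for the main difficulty, there is essentially none beyond correctly bookkeeping the two equality conditions and combining them; the only point that needs a word of care is the degenerate case $\det\mathbf{J}=0$ (or a vanishing column), which is ruled out from the start by invertibility of the estimated unmixing map $\hat{\mathbf{f}}^{-1}$, so neither logarithm ever diverges.
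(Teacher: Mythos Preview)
Your proposal is correct and follows essentially the same route as the paper's proof: chain Hadamard's inequality $|\det\mathbf{J}|\le\prod_i\|v_i\|_2$ with the AM--GM inequality $\prod_i\|v_i\|_2\le\bigl(\tfrac{1}{n}\sum_i\|v_i\|_2\bigr)^{n}$, then read off the equality case from the two simultaneous equality conditions (orthogonal columns of equal norm). Your treatment is in fact slightly more careful than the paper's, since you explicitly note that invertibility of $\hat{\mathbf{f}}^{-1}$ rules out vanishing columns and makes both logarithms finite.
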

\looseness=-1
Regarding the intuition of independent influences, one could consider a film-making process, in which the main characters are wild animals. The task of the boom operators here is to record the sound of animals. As there are many animals and microphones, the mixing (recording) process is highly dependent on the relative positions between them. Thus, the animal (source $s_i$), loosely speaking, influences the recording (mixing function $\mathbf{f}$) through its position. The moving direction and speed of them can be loosely interpreted as the partial derivative w.r.t. the $i$-th source, i.e. $\partial \mathbf{f} / \partial s_{i}$. Assuming that the wild animals are not cooperative enough to fine-tune their positions and speeds for a better recording and the safari park is not crowded, the influences of animals on the recording are generated independently by them in the sense that they are not affected by the others while moving. As a result of that generating process, the column vectors of the Jacobian of the mixing function are uncorrelated with each other and the partial derivative w.r.t. to each source is independent of other sources. Other practical scenarios include fields that adopt independent influences in process like orthogonal coordinate transformations \citep{gresele2021independent}, such as dynamic control \citep{mistry2010inverse} and structural geology \citep{de1983orthographic}. In \citep{gresele2021independent}, orthogonality, together with conformal map and the others, are assumed to rule out two specific types of spurious solutions (i.e., Darmois construction \citep{darmois1951analyse} and ``rotated-Gaussian'' MPAs \citep{locatello2019challenging}). Full identifiability with either the assumption of orthogonality or independent influences has not been established. 

Having said that, a violation of the constraint of independent influences could be ascribed to a deliberate global adjustment of sources. For example, cinema audio systems are carefully adjusted to achieve a homogeneous sound effect on every audience. The position and orientation of each speaker are fine-tuned according to the others. In this case, it may be difficult to distinguish the influences from different speakers because of the fine-tuning. This may lead to a high degree of multicollinearity across the columns of Jacobian, thus violating the constraint.

Regarding the intuition of the factorial Jacobian determinant, we first note that mixing functions with factorial Jacobian determinants are of a much wider range as compared to component-wise transformations. To see this, a straightforward example is the volume-preserving transformation, which has been widely adopted in generative models \citep{zhang2021ivpf}. In fact, all transformations with constant Jacobian determinant can be factorized w.r.t. the sources, which, generally speaking, are not component-wise. Besides, consistent with the empirical results in \cite{annoymous2022iclr}, there exists other non-volume-preserving transformations with factorial Jacobian determinant \footnote{A toy example: the Jacobian determinant of mixing function w.r.t. $\big(x_1 = \frac{a s_1 s_2}{a + b s_2}, x_2 = \frac{b s_1}{b + a s_2}\big)$ is $a b s_1$, where $a,b\neq 0$ are some constants.}. Moreover, the volume-preserving assumption has been demonstrated to be helpful to weaken the requirement of auxiliary variables, specifically by reducing the number of required labels \citep{sorrenson2020disentanglement}. This may be true even for its weaker variant, i.e., factorial Jacobian determinant \citep{annoymous2022iclr}. Thus, it might be worthy exploring this constraint together with other conditions.

\begin{figure}[t]
\centering
\subfloat[\textit{SS}.]{
  \includegraphics[width=3.2cm]{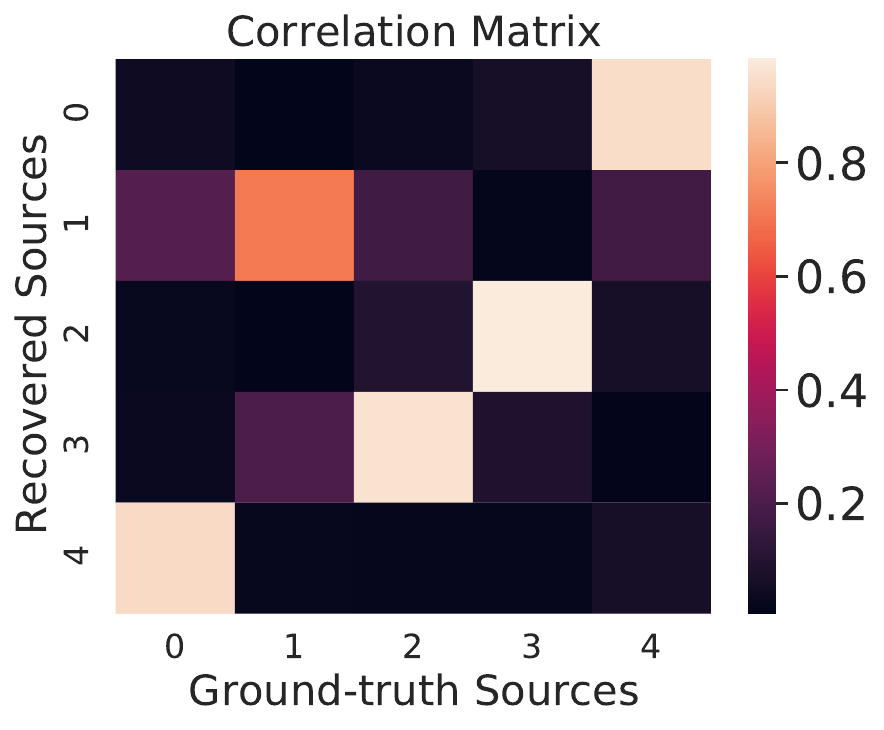}
}
\subfloat[\textit{II}.]{
  \includegraphics[width=3.2cm]{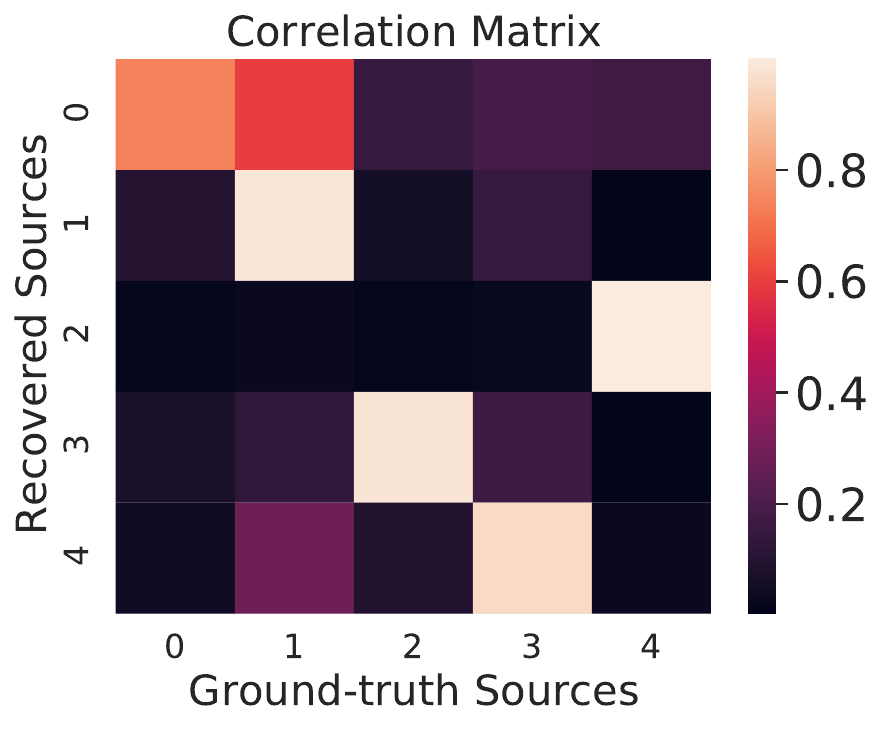}
}
\subfloat[\textit{VP}.]{
  \includegraphics[width=3.2cm]{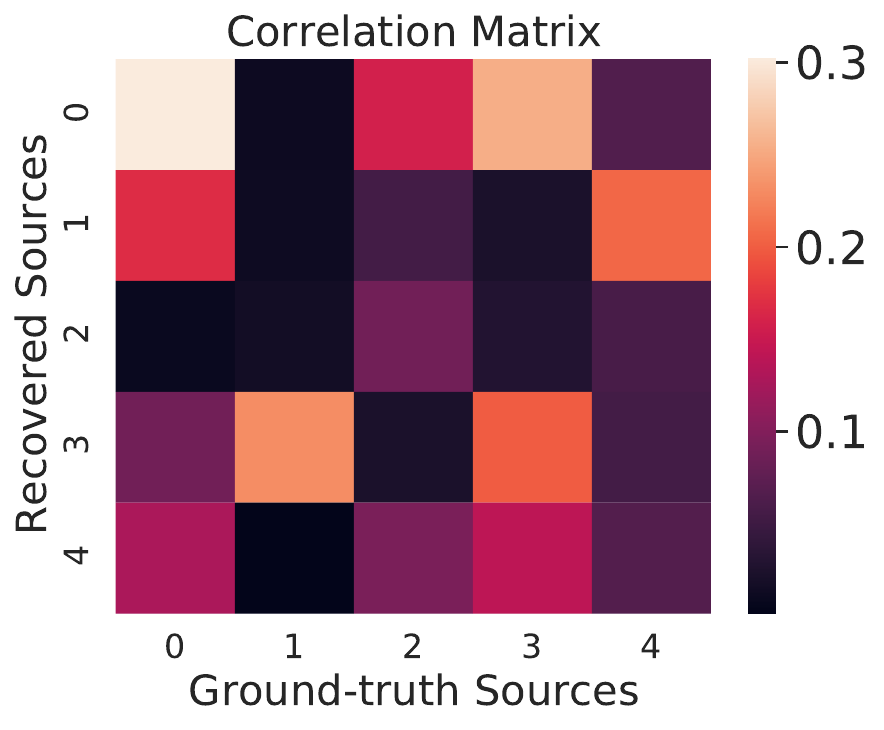}
}
\subfloat[\textit{Base}.]{
  \includegraphics[width=3.2cm]{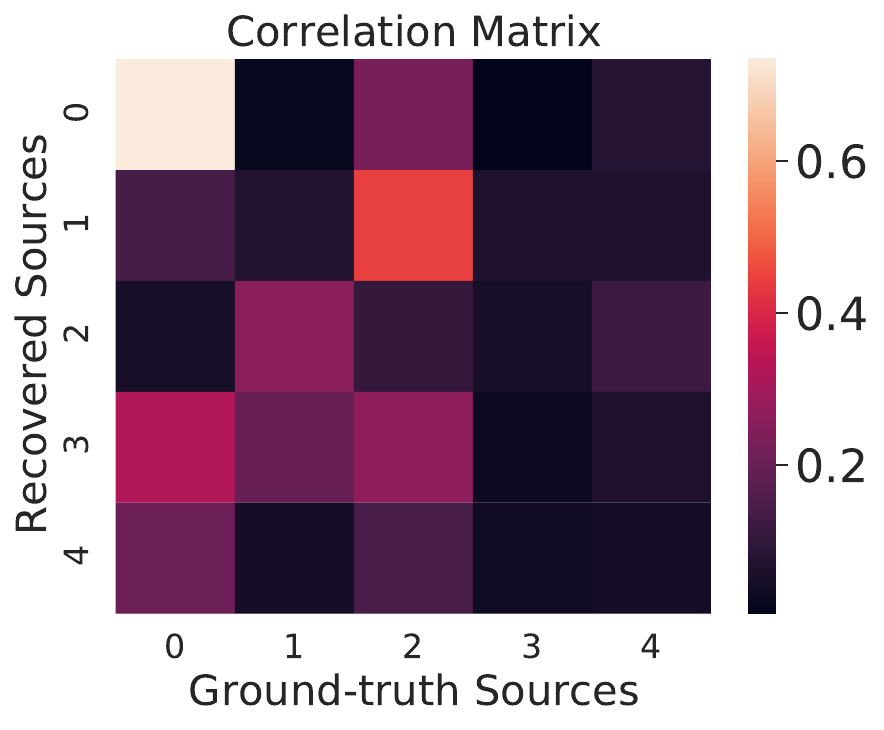}
}
\caption{Pearson correlation matrices between the ground-truth and the recovered sources.}
\label{fig:cor_5}
\vspace{-1em}
\end{figure}

\section{Experiments}
To validate the proposed theory of the identifiability of nonlinear ICA with unconditional priors, we conduct experiments based on the assumption of structural sparsity (Thm. \ref{thm:nl_identifiability_ss}).

\textbf{Setup.} \ \
For the required regularization during estimation, we consider a regularized maximum-likelihood approach with the following objective: $\mathcal{L}(\hat{\mathbf{f}}^{-1} ; \mathbf{x})=\mathbb{E}_{\mathbf{x}}\left[\log p_{\hat{\mathbf{f}}^{-1}}(\mathbf{x}) - \lambda \mathbf{R}(\hat{\mathbf{f}}^{-1}, \mathbf{x})\right]$, where $\lambda$ is a regularization parameter and $\mathbf{R}(\hat{\mathbf{f}}^{-1}, \mathbf{x})$ is the regularization term. For Thm. \ref{thm:nl_identifiability_ss}, we use $L_1$ norm as an approximation of the $L_0$ norm for efficiency \citep{donoho2003optimally}, therefore $\mathbf{R}(\hat{\mathbf{f}}^{-1}, \mathbf{x}) := \|\mathbf{J}_{\hat{\mathbf{f}}^{-1}}(\mathbf{x})\|_1$; for Prop. \ref{prop:reg}, $\mathbf{R}(\hat{\mathbf{f}}^{-1}, \mathbf{x})$ corresponds to LHS of Eq. \eqref{eq:reg_ineq}. Following \citep{sorrenson2020disentanglement}, we train a GIN to maximize the the objective function $\mathcal{L}(\hat{\mathbf{f}}^{-1} ; \mathbf{x})$.

\begin{wrapfigure}{r}{0.34\textwidth}
\vspace{-1.7em}
  \begin{center}
    \includegraphics[width=0.34\textwidth]{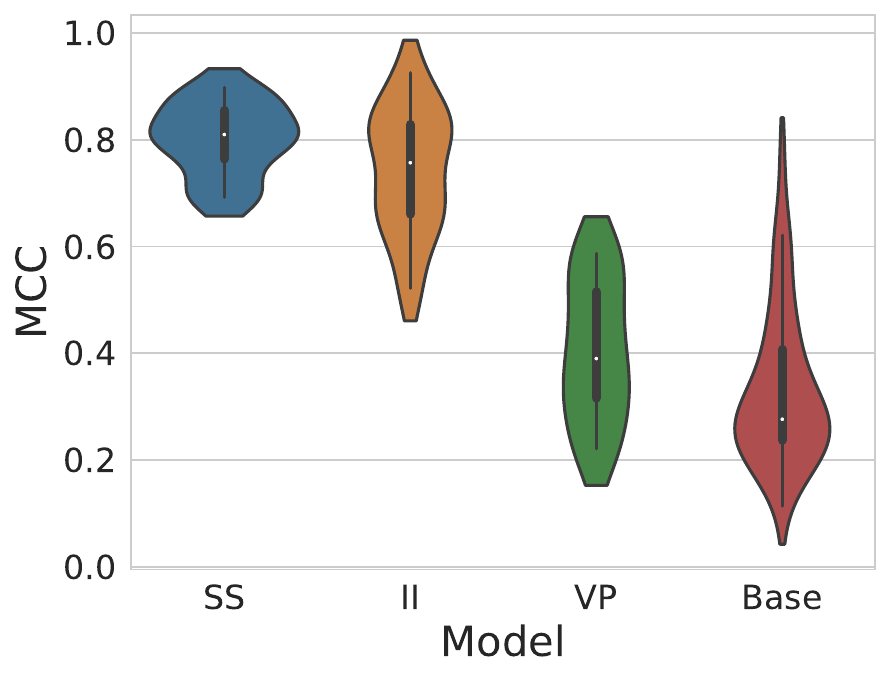}
  \end{center}
  \vspace{-0.6em}
  \caption{Ablation study.}
  \label{fig:ablation_mcc}
\vspace{-1em}
\end{wrapfigure}

\looseness=-1
\textbf{Ablation study.} \ \
We conduct an ablation study to verify the necessity of the proposed assumptions. Specifically, we focus on the following models that correspond to different assumptions: \textit{(SS)} The constraint of structural sparsity, as well as other constraints in Sec. \ref{sec:ms}, are satisfied; \textit{(II)} The constraint of independent influences, as well as other constraints, are satisfied; \textit{(VP)} Compared to \textit{II}, only the constraint of independent influences is violated while the other constraints (e.g., factorizable Jacobian determinant) are still satisfied; \textit{(Base)} The vanilla baseline. Compared to \textit{VP}, the (un)mixing function is not restricted to having factorizable Jacobian determinants. The data are generated according to the required assumptions. We also conduct experiments to evaluate the assumption of orthogonality in \citep{gresele2021independent}, which are presented in Appx. \ref{sec:ex_ap} together with experimental settings. Results for each model are summarised in Fig. \ref{fig:ablation_mcc}. For evaluation, we use the mean correlation coefficient (MCC) between the true sources and the estimated ones \citep{hyvarinen2016unsupervised}. One could observe that when the proposed assumptions are fully satisfied, our model achieves the highest MCC on average. This indicates that it is actually possible to identify sources from highly nonlinear mixtures up to trivial indeterminacies only based on restrictions on the mixing process. The visualization of the Pearson correlation matrices is shown in Fig. \ref{fig:cor_5}.

\begin{wrapfigure}{r}{0.5\textwidth}
\vspace{-1.5em}
  \begin{center}
    \includegraphics[width=0.5\textwidth]{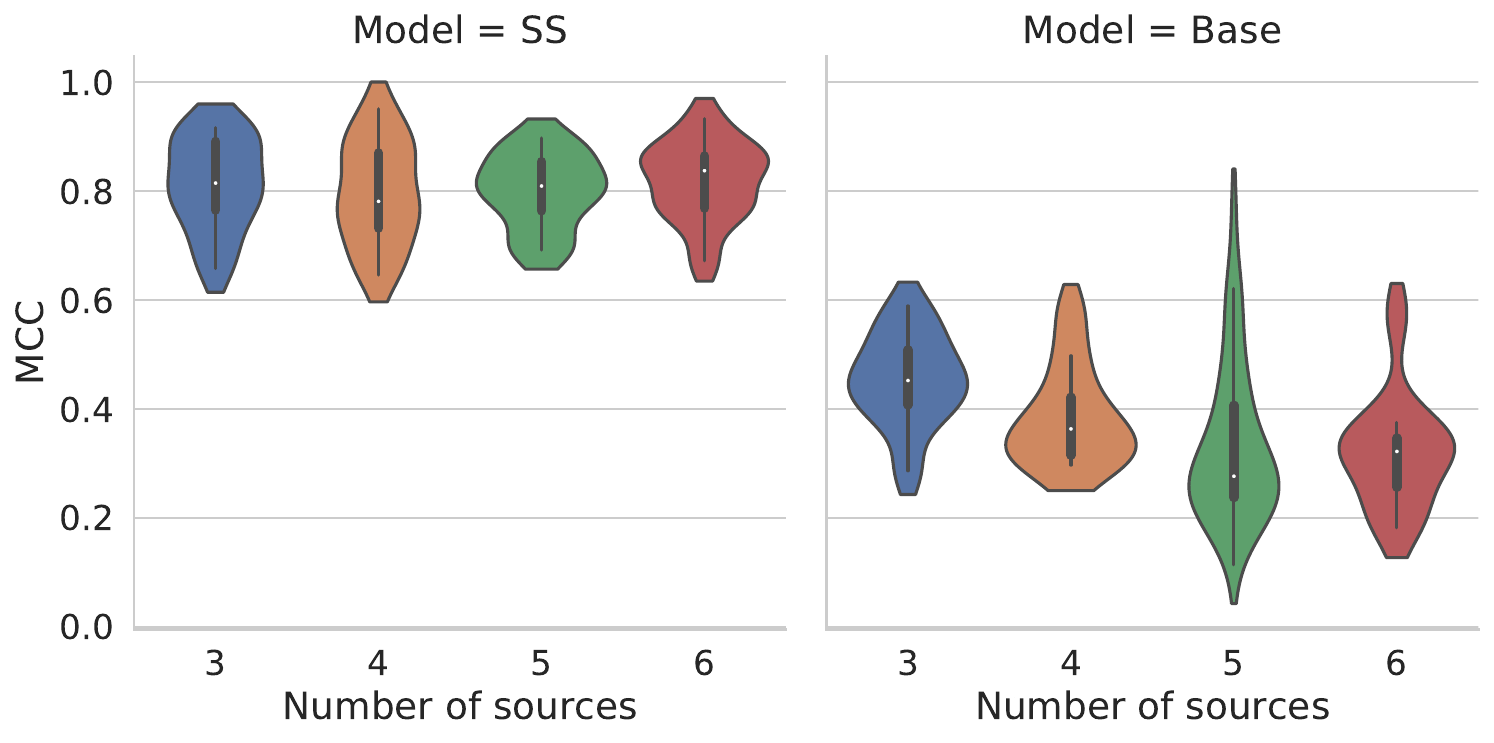}
  \end{center}
  \vspace{-0.6em}
  \caption{MCC w.r.t. different number of sources.}
  \label{fig:n_com_mcc}
\vspace{-0.5em}
\end{wrapfigure}
\textbf{Stability.} \ \
To study the stability of the performance of identification w.r.t. different datasets varying the number of sources $n$. We test the model \textit{SS} (Thm. \ref{thm:nl_identifiability_ss}) with different $n$. Visually, we find that \textit{SS} consistently outperforms \textit{Base} (Fig. \ref{fig:n_com_mcc}). Meanwhile, when the number of sources increases, one could observe that the MCC of \textit{Base} decreases while that of \textit{SS} stays stable.
The stable empirical performance further validates our theoretical claims about the identifiability with structural sparsity.


\begin{wrapfigure}{r}{0.5\textwidth}
\vspace{-0.3em}
  \begin{center}
    \includegraphics[width=0.5\textwidth]{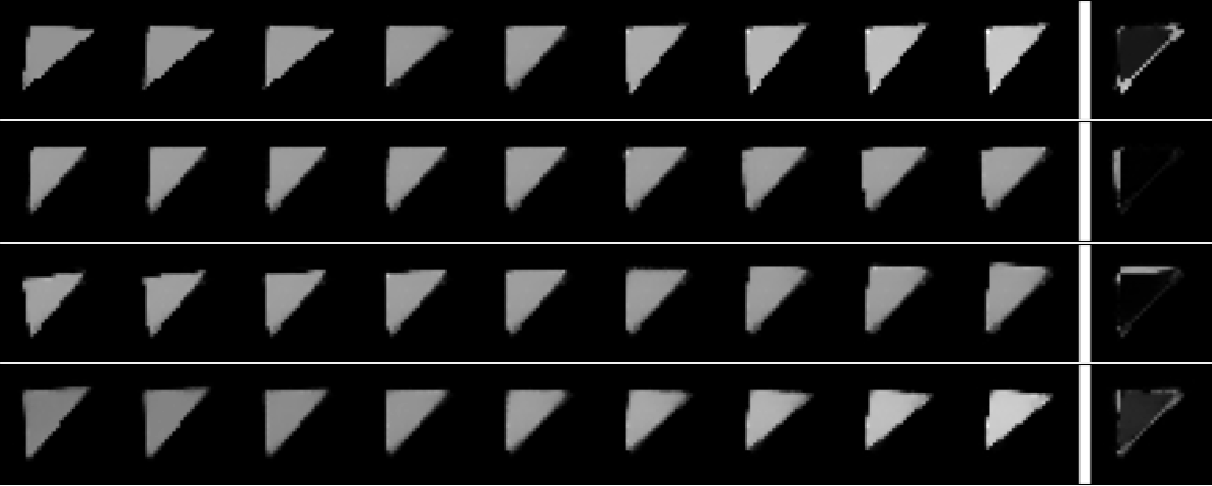}
  \end{center}
  \vspace{-0.6em}
  \caption{Identification results on Triangles. Each row represents a source identified by our model, with it varying from -2 to +2 SDs to illustrate its influence. The rightmost column is a heat map given by the absolute pixel difference between -1 and +1 SDs. Visually, the four rows correspond to rotation, width, height and gray level, respectively.}
  \label{fig:triangle_result}
\vspace{-0.75em}
\end{wrapfigure}

\looseness=-1
\textbf{Image dataset.} \ \
To study how reasonable the proposed theories are w.r.t. the practical generating process of observational data, we conduct experiments on the ``Triangles" image dataset \citep{annoymous2022iclr}. The process of generating this dataset mimics the process of drawing triangles by humans: i) First, we sample the elements needed for humans to draw a monochrome triangle (i.e., rotation, width, height and grey level) from a factorial multivariate Gaussian distribution. Different from \citep{annoymous2022iclr}, we always sample from a single distribution in order to guarantee that all priors are unconditionally independent; ii) Then, for each pixel, we decide whether it locates inside the triangle based on the sampled elements and assign its gray level accordingly. Therefore, the process is similar to human drawing triangles. Even though each image is generated from these semantic elements, the true generating process and sources are still unknown (e.g., a pixel could be (indirectly) influenced by multiple elements in a complicated way). We apply GIN with sparsity regularization as the estimating method. The visualization of the identified sources (Fig. \ref{fig:triangle_result}) indicates that our conditions may hold in practice.

\begin{wrapfigure}{r}{0.34\textwidth}
\vspace{-1.7em}
  \begin{center}
    \includegraphics[width=0.34\textwidth]{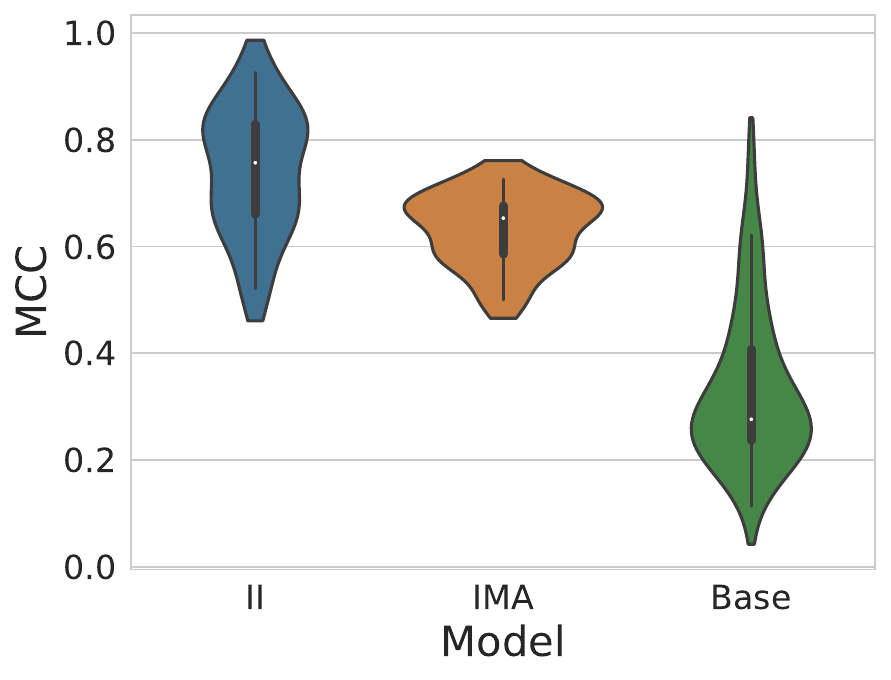}
  \end{center}
  \vspace{-0.6em} \caption{MCC w.r.t. independent influences (\textit{II}), independent mechanism analysis (\textit{IMA}), and the baseline model (\textit{Base}).}
  \label{fig:mcc_ii_ima}
\vspace{-0.75em}
\end{wrapfigure}

\textbf{IMA.} \ \ Recently, \citet{gresele2021independent} assume orthogonality, conformal map, and the others in order to rule out Darmois construction and ``rotated-Gaussian'' MPAs. They formalize the orthogonality between columns of the Jacobian of the mixing function as independent mechanism analysis (\textit{IMA}) and show empirically that it improves the identification of latent sources. To further explore their exciting results, we generate datasets according to the generating process and regularization term described in \citep{gresele2021independent}. We sample the sources from the same distributions described in Appx. \ref{sec:ex_ap} and use GIN for training. From Fig. \ref{fig:mcc_ii_ima}, one could observe that both \textit{IMA} and \textit{II} outperform the baseline (\textit{Base}) largely, which indicates that these conditions are empirically helpful to the identifiability of nonlinear ICA. At the same time, MCC of \textit{II} appears to be even higher than that of \textit{IMA}, which might thanks to some additional constraints, such as the factorial volume. Meanwhile, it also supports the conjecture that \textit{IMA} is a sensible condition for identification, and the identifiability based on it may be achieved with some additional assumptions, such as conformal maps \citep{gresele2021independent}.

\section{Discussion and Conclusion} \label{sec:disc_ap}

\looseness=-1
\textbf{Sparsity Assumptions.} \ \ Sparsity assumptions have been widely used in various fields. For latent variable models, sparsity in the generating process plays an important role in the disentanglement or identification of latent factors both empirically and theoretically \citep{bing2020adaptive, rohe2020vintage, moran2021identifiable, rhodes2021local, lachapelle2021disentanglement}. In causality, various versions of Occam's razor have been proposed to serve as fundamental assumptions for identifying the underlying causal structure \citep{spirtes2000causation, zhang2013comparison, raskutti2018learning, forster2020frugal}.


Formulated as a measure of the density of dependencies, sparsity assumptions are more likely to be held when the observations are actually influenced by the sources in a ``simple” way. For example, in biology, analyses on ecological, gene-regulatory, metabolic, and other living systems find that active interactions may often be rather sparse \citep{busiello2017explorability}, even when these systems evolve with an unlimited number of complicated external stimuli. In physics, it is an important heuristic that a relatively small set of laws govern complicated observed phenomena. For instance, Einstein's theory of special relativity contains parsimonious relations between substances as an important heuristic to shave away the influence of ether compared to Lorentz's theory \citep{einstein1905does, nash1963nature}.

However, sparsity is not an irrefutable principle and our assumption may fail in a number of situations. The most direct one could be a scenario with heavily entangled relations between sources and observations. Let us consider the example of animal filmmaking in Sec. \ref{sec:ms}, where people are recording the sound of animals in a safari park. If the filming location is restricted to a narrow area of the safari park and multiple microphones are gathered together, the recording of each microphone will likely be influenced by almost all animals. In such a case, the dependencies between the recording of microphones and the animals are rather dense and our sparsity assumption is most likely not valid.

\looseness=-1
At the same time, even when the principle of simplicity holds, our formulation of sparsity, which is based on the sparse interactions between sources and observations, may still fail. One reason for this is the disparity between mechanism simplicity and structural sparsity. To illustrate this, one could consider the effect of sunlight on the shadow angles at the same location. In this case, the sun’s rays and the shadow angles act as the sources and the observations, respectively. Because rays of sunlight, loosely speaking, may be parallel to each other, the processes of them influencing the shadow angles may be almost identical. Thus, the influencing mechanism could be rather simple. On the other hand, each shadow angle is influenced by an unlimited number of the sun’s rays, which indicates that the interactions between them may not be sparse, therefore violating our assumption. This sheds light on one of the limitations of our sparsity assumption, because the principle of simplicity could be formulated in several ways. Besides, these different formulations also suggest various possibilities for identifiability based on simplicity assumptions. Another assumption, i.e., independent influences, may be one of the alternative formulations, and more works remain to be explored in the future.

\looseness=-1
\textbf{Conclusion.} \ \ We provide identifiability results for nonlinear ICA with unconditional priors, which serve as one of the first steps to solve a long-standing problem in unsupervised learning. In particular, we prove that the i.i.d. latent sources can be recovered up to a component-wise invertible transformation and a permutation with only conditions on the nonlinear mixing process (e.g., structural sparsity). Therefore it stays closer to the original notion of ICA that is based on the marginal independence assumption of latent sources, while previous works rely on conditional independence on auxiliary variables as weak supervision or inductive bias. Besides, by removing rotation indeterminacy, structural sparsity benefits the identifiability of Gaussian ICA as well, which was also thought to be unsolvable before. Moreover, the results on the undercomplete case are of great practical interest and introduce insight for extending identifiable nonlinear ICA to general real-world settings. 

\looseness=-1
Our results on images illustrate the validity of the proposed conditions in practical data generating processes. In spite of this, it is possible that part of them is violated in several specific scenarios as discussed before. For example, the structural sparsity conditions do not apply to fully-connected structures, though the practical significance of such cases may be compromised by the lack of interpretability. We argue that this is inevitably a trade-off between introducing auxiliary variables and imposing restrictions on the mixing process to achieve the identifiability, whose practical use depends on the scenario and information available. Future work includes further generalizing and validating our theory.

\section*{Acknowledgements}

We thank Lingjing Kong, Sébastien Lachapelle, Simon Buchholz, and the anonymous reviewers for their constructive comments. This work was partially supported by the National Institutes of Health (NIH) under Contract R01HL159805, by the NSF-Convergence Accelerator Track-D award \#2134901, by a grant from Apple Inc., and by a grant from KDDI Research Inc..


{\bibliographystyle{abbrvnat}
\bibliography{bibliography.bib}}

\newpage
\appendix

\section{Proofs}\label{sec:proofs}

\subsection{Proof of Theorem \ref{thm:nl_identifiability_ss}}\label{sec:proof_nl_identifiability_ss}

\NLIdentifiabilitySS*

\begin{proof}
Our goal here is to show that function $\h := \hat{\f}^{-1} \circ \f$ is a permutation with component-wise invertible transformation of sources, i.e., $\hat{\f} = \f \circ \h^{-1}(\s) $. Let $\mathbf{D}(\s)$ represents a diagonal matrix and $\mathbf{P}$ represent a permutation matrix. By using chain rule repeatedly, we write $\hat{\f} = \f \circ \h^{-1}(\s) $ equivalently as
\begin{equation}
\begin{aligned}
\mathbf{J}_{\hat{\mathbf{f}}}(\hat{\s})
&=\mathbf{J}_{\mathbf{f} \circ \h^{-1}}(\h(\s)) \\
&=\mathbf{J}_{\mathbf{f} \circ \mathbf{g}^{-1} \circ \mathbf{P}^{-1}}(\mathbf{P} \g(\s)) \\
&=\mathbf{J}_{\mathbf{f} \circ \g^{-1}}\left(\mathbf{P}^{-1} \mathbf{P} \g(\s)\right) \mathbf{J}_{\mathbf{P}^{-1}}(\mathbf{P} \g(\s)) \\
&=\mathbf{J}_{\mathbf{f} \circ \g^{-1}}(\g(\s)) \mathbf{J}_{\mathbf{P}^{-1}}(\mathbf{P} \g(\s)) \\
&=\mathbf{J}_{\mathbf{f}}\left(\g^{-1} \g(\s)\right) \mathbf{J}_{\g^{-1}}(\g(\s)) \mathbf{J}_{\mathbf{P}^{-1}}(\mathbf{P} \g(\s)) \\
&=\mathbf{J}_{\mathbf{f}}(\s) \mathbf{D}(\s) \mathbf{P},
\end{aligned}
\end{equation}
where $\g$ is an invertible element-wise function. Thus our goal is equivalent to show that 

\begin{equation} \label{eq:goal_ss}
    \mathbf{J}_{\hat{\f}}(\hat{\s}) =  \mathbf{J}_{\mathbf{f}}(\s) \mathbf{D}(\s) \mathbf{P}. 
\end{equation}

Because $\mathbf{J}_{\hat{\f}}(\hat{\s})$ and $\mathbf{J}_{\f}(\s)$ are both invertible, we have the following equation
\begin{equation} \label{eq:inv}
    \mathbf{J}_{\hat{\f}}(\hat{\s}) = \mathbf{J}_{\f}(\s) \mathbf{T}(\s), 
\end{equation}
where $\mathbf{T}(\s)$ is an invertible matrix.

Note that we denote $\mathcal{F}$ as the support of $\mathbf{J}_{\f}(\s)$, $\hat{\mathcal{F}}$ as the support of $\mathbf{J}_{\hat{\f}}(\hat{\s})$ and $\mathcal{T}$ as the support of $\mathbf{T}(\s)$. Besides, we denote $\mathrm{T}$ as a matrix with the support $\mathcal{T}$. According to Assumption \ref{assum:nl_3}, we have
\begin{equation}
    \operatorname{span}\{\J_{\f}(\s^{(\ell)})_{i,:}\}_{\ell=1}^{|\mathcal{F}_{i,:}|} = \mathbb{R}_{\mathcal{F}_{i,:}}^{n}.
\end{equation}
Since $\{\J_{\f}(\s^{(\ell)})_{i,:}\}_{\ell=1}^{|\mathcal{F}_{i,:}|}$ forms a basis of $\mathbb{R}_{\mathcal{F}_{i, :}}^{n}$, for any $j_0 \in \mathcal{F}_{i,:}$, we are able to rewrite the one-hot vector $e_{j_0}\in\mathbb{R}_{\mathcal{F}_{i, :}}^{n}$ as
\begin{equation}
    e_{j_0} = \sum_{\ell \in \mathcal{F}_{i, :}} \alpha_{\ell} \J_{\f}(\s^{(\ell)})_{i,:},
\end{equation}
where
$\alpha_\ell$ is the corresponding coefficient. Then
\begin{equation} 
    \mathrm{T}_{j_0,:} = e_{j_0} \mathrm{T} = \sum_{\ell \in \mathcal{F}_{i,:}} \alpha_{\ell} \J_{\f}(\s^{(\ell)})_{i,:} \mathrm{T} \in \mathbb{R}_{\hat{\mathcal{F}}_{i,:}}^{n},
\end{equation}
where the final ``$\in$'' follows from Assumption \ref{assum:nl_3} that each element in the summation belongs to $\mathbb{R}_{\hat{\mathcal{F}}_{i, :}}^{n}$. Thus
\begin{equation}
    \forall j \in \mathcal{F}_{i,:},\ \mathrm{T}_{j,:} \in \mathbb{R}_{\hat{\mathcal{F}}_{i,:}}^{n}.
\end{equation}



Then we are able to draw connections between these supports according to Defn. \ref{def:supp_matrix_function}
\begin{equation} \label{eq:connection}
    \forall(i, j) \in \mathcal{F}, \{i\} \times \mathcal{T}_{j,:} \subset \hat{\mathcal{F}}.
\end{equation}
According to Assumption \ref{assum:nl_2}, $\mathbf{T}(\s)$ is an invertible matrix, indicating that it has a non-zero determinant. Representing the determinant of the matrix $\mathbf{T}(\s)$ as its Leibniz formula yields
\begin{equation}
    \operatorname{det}(\mathbf{T}(\s)) = \sum_{\sigma \in \mathcal{S}_{n}} \left(\operatorname{sgn}(\sigma) \prod_{i=1}^{n} \mathbf{T}(\s)_{i, \sigma(i)}\right) \neq 0,
\end{equation}
where $\mathcal{S}_{n}$ is the set of $n$-permutations. Then there exists at least one term of the sum that is non-zero, i.e.,
\begin{equation}
    \exists \sigma \in \mathcal{S}_{n},\ \forall i \in \{1, \ldots, n\},\ \operatorname{sgn}(\sigma) \prod_{i=1}^{n} \mathbf{T}(\s)_{i, \sigma(i)} \neq 0.
\end{equation}
This is equivalent to
\begin{equation}
    \exists \sigma \in \mathcal{S}_{n},\ \forall i \in \{1, \ldots, n\},\ \mathbf{T}(\s)_{i, \sigma(i)} \neq 0.
\end{equation}
Then we can see that this $\sigma$ is in the support of $\mathbf{T}(\s)$, which implies that
\begin{equation}
    \forall j \in \{1, \ldots, n\},\ \sigma(j) \in \mathcal{T}_{j,:}.
\end{equation}
Together with Eq. \eqref{eq:connection}, it follows that
\begin{equation}
    \forall (i,j) \in \mathcal{F}, (i, \sigma(j)) \in \{i\} \times \mathcal{T}_{j,:} \subset \hat{\mathcal{F}}.
\end{equation}
Denote
\begin{equation}
\label{eq:21}
    \sigma(\mathcal{F}) = \{(i, \sigma(j)) \mid(i, j) \in \mathcal{F}\}.
\end{equation}
Then we have
\begin{equation} \label{eq:inclusion}
    \sigma(\mathcal{F}) \subset \hat{\mathcal{F}}.
\end{equation}
According to Assumption \ref{assum:nl_4}, we can see that
\begin{equation}
    |\hat{\mathcal{F}}| \leq |\mathcal{F}| = |\sigma(\mathcal{F})|. 
\end{equation}
Together with Eq. \eqref{eq:inclusion}, we have
\begin{equation} \label{eq:27}
    \sigma(\mathcal{F}) = \hat{\mathcal{F}}.
\end{equation}

Suppose $\mathbf{T}(\s) \neq \mathbf{D}(\s) \mathbf{P}$, then
\begin{equation}
    \exists j_1 \neq j_2,\  \mathcal{T}_{j_1, :} \cap \mathcal{T}_{j_2, :} \neq \emptyset.
\end{equation}
Besides, consider $j_3 \in \{1, \ldots, n\}$ such that
\begin{equation}
    \sigma(j_3) \in \mathcal{T}_{j_1, :} \cap \mathcal{T}_{j_2, :}.
\end{equation}
Because $j_1 \neq j_2$, we can assume $j_3 \neq j_1$ without loss of generality. A similar strategy has previously been used in \citep{lachapelle2021disentanglement}. By Assumption \ref{assum:nl_5}, there exists $\mathcal{C}_{j_1} \ni j_1$ such that $\bigcap_{i \in \mathcal{C}_{j_1}} \mathcal{F}_{i,:}=\{j_1\}$. Because 
\begin{equation}
    j_3 \not\in \{j_1\} =  \bigcap_{i \in \mathcal{C}_{j_1}} \mathcal{F}_{i, :},
\end{equation}
there must exists $i_3 \in \mathcal{C}_{j_1}$ such that
\begin{equation} \label{eq:29}
    j_3 \not \in \mathcal{F}_{i_3, :}.
\end{equation}
Because $j_1 \in \mathcal{F}_{i_3, :}$, we have $(i_3, j_1) \in \mathcal{F}$. Then according to Eq. \eqref{eq:connection}, we have the following equation
\begin{equation} \label{eq:30} 
    \{i_3\} \times \mathcal{T}_{j_1, :} \subset \hat{\mathcal{F}}.
\end{equation}
Notice that $\sigma(j_3) \in \mathcal{T}_{j_1, :} \cap \mathcal{T}_{j_2, :}$ implies 
\begin{equation} \label{eq:31}
    (i_3, \sigma(j_3)) \in \{i_3\} \times \mathcal{T}_{j_1, :}.
\end{equation}
Then by Eqs. \eqref{eq:31} and \eqref{eq:30}, we have
\begin{equation}
    (i_3, \sigma(j_3)) \in \hat{\mathcal{F}},
\end{equation}
which implies $(i_3, j_3) \in \mathcal{F}$ by Eq. \eqref{eq:27} and Eq. \eqref{eq:21}, therefore contradicting Eq. \eqref{eq:29}. Thus, we prove by contradiction that $\mathbf{T}(\s) = \mathbf{D}(\s) \mathbf{P}$. Replacing $\mathbf{T}(\s)$ with $\mathbf{D}(\s) \mathbf{P}$ in Eq. \eqref{eq:inv}, we prove Eq. \eqref{eq:goal_ss}, which is the goal.
\end{proof}

\subsection{Proof of Corollary \ref{cor:nl_identifiability_linear}}\label{sec:proof_nl_identifiability_linear}
\NLIdentifiabilitylinear*

\begin{proof}

The proof technique of this corollary is inspired by Thm. 2 in \citet{annoymous2022iclr}. According to Assumption \ref{assum:nl_linear_4}, true sources $\s$ are from a factorial multivariate Gaussian distribution. Together with the estimated sources $\hat{\s}$ from the same type of distribution, we represent the densities of the true and estimated sources as
\begin{equation}\label{eq:density_l}
\begin{aligned}
p_{\s}(\mathbf{s}) &=\prod_{i=1}^{n} \frac{1}{Z_{i}} \exp \left(-\theta^{\prime}_{i, 1} s_{i}-\theta^{\prime}_{i, 2} s_{i}^{2}\right), \\
p_{\hat{\s}}(\hat{\mathbf{s}}) &=\prod_{i=1}^{n} \frac{1}{Z_{i}} \exp \left(-\theta_{i, 1} \hat{s}_{i}-\theta_{i, 2} \hat{s}_{i}^{2}\right),
\end{aligned}
\end{equation} 
where $Z_{i} > 0$ is a constant. The sufficient statistics $\theta_{i, 1}$ and $\theta_{i, 2}$ are assumed to be linearly independent.

Applying the change of variable rule, we have $p_{\s}(\s) = p_{\hat{\s}}(\hat{\s})|\det(\mathbf{J}_{\mathbf{h}}(\s))|$, which, by plugging in Eq. \eqref{eq:density_l} and taking the logarithm on both sides, yields
\begin{equation}
\sum_{i=1}^{n}\log p_{s_{i}}(s_{i}) - \log|\det(\mathbf{J}_{\mathbf{h}}(\s))|= -\sum_{i=1}^{n}\left(\theta_{i, 1} h_i(\s)+\theta_{i, 2}h_i(\s)^{2}+\log Z_{i}\right).
\end{equation}
Because of Assumption \ref{assum:nl_linear_3} and the corresponding estimating process, $\det(\mathbf{J}_{\mathbf{h}}(\s)) = 1$. Thus
\begin{equation} \label{eq:before_taylor_l}
\sum_{i=1}^{n}\log p_{s_{i}}(s_{i}) = -\sum_{i=1}^{n}\left(\theta_{i, 1} h_i(\s)+\theta_{i, 2}h_i(\s)^{2}+\log Z_{i}\right).
\end{equation}
According to Assumption \ref{assum:nl_linear_2}, function $\mathbf{h}$ is a component-wise invertible transformation of sources, i.e., 
\begin{equation} \label{eq:component_l}
    h_i(\s) = h_i(s_i).
\end{equation}
Combining Eqs. \eqref{eq:density_l} and \eqref{eq:component_l} with Eq. \eqref{eq:before_taylor_l}, it follows that
\begin{equation}
\sum_{i=1}^{n}\left(-\theta^{\prime}_{i, 1} s_{i}-\theta^{\prime}_{i, 2} s_{i}^{2}\right) = -\sum_{i=1}^{n}\left(\theta_{i, 1} h_{i}(s_i)+\theta_{i, 2}h_i(s_i)^{2}+\log Z_{i}\right).
\end{equation}
Then we have
\begin{equation}
\theta^{\prime}_{i, 1} s_{i}+\theta^{\prime}_{i, 2} s_{i}^{2} + \log Z_{i} = \theta_{i, 1} h_{i}(s_i)+\theta_{i, 2}h_i(s_i)^{2}.
\end{equation}
Therefore, $h_i(s_i)$ is a linear function of $s_i$.
\end{proof}

\subsection{Proof of Proposition \ref{prop:l_identifiability_ss_1}}\label{sec:proof_l_identifiability_ss_1}
\LIdentifiabilitySSOne*

\begin{proof}
Let $\hat{\A} = \A \mathbf{T}$, where $\mathbf{T} \in \mathbb{R}^{n \times n}$ is an arbitrary matrix. Because our goal is to prove $\hat{\A} = \A \D \mathbf{P}$, it is equivalent to prove the following equation:
\begin{equation} \label{eq:goal_l}
    \mathbf{T} = \D \mathbf{P}.
\end{equation}

For brevity of notation, we denote $\mathcal{A}$ as the support of $\hat{\A}$, $\hat{\mathcal{A}}$ as the support of $\A$ and $\mathcal{T}$ as the support of $\mathbf{T}$. 

According to Assumption \ref{assum:linear_ss1_3}, we have
\begin{equation}
    \forall j \in \operatorname{supp}(\A)_{i,:},\ \operatorname{supp}(\hat{\A}\A^{-1})_{j,:} \in \mathbb{R}_{\operatorname{supp}(\hat{A})_{i,:}}^{n},
\end{equation}
which is equivalent to
\begin{equation}
    \forall j \in \mathcal{A}_{i,:},\ \mathbf{T}_{j,:} \in \mathbb{R}_{\hat{\mathcal{A}}_{i,:}}^{n}.
\end{equation}
Then we have
\begin{equation} \label{eq:connection_l}
    \forall(i, j) \in \mathcal{A},\ \{i\} \times \mathcal{T}_{j,:} \subset \hat{\mathcal{A}}.
\end{equation}
According to Assumption \ref{assum:linear_ss1_2}, $\mathbf{T}$ is an invertible matrix, indicating that it has a non-zero determinant. Representing the determinant of the matrix $\mathbf{T}$ as its Leibniz formula yields
\begin{equation}
    \operatorname{det}(\mathbf{T}) = \sum_{\sigma \in \mathcal{S}_{n}} \left(\operatorname{sgn}(\sigma) \prod_{i=1}^{n} \mathbf{T}_{i, \sigma(i)}\right) \neq 0,
\end{equation}
where $\mathcal{S}_{n}$ is the set of $n$-permutations. Then there exists at least one term of the sum that is non-zero, i.e.,
\begin{equation}
    \exists \sigma \in \mathcal{S}_{n},\ \forall i \in \{1, \ldots, n\},\ \operatorname{sgn}(\sigma) \prod_{i=1}^{n} \mathbf{T}_{i, \sigma(i)} \neq 0.
\end{equation}
This is equivalent to
\begin{equation}
    \exists \sigma \in \mathcal{S}_{n},\ \forall i \in \{1, \ldots, n\},\ \mathbf{T}_{i, \sigma(i)} \neq 0.
\end{equation}
Then we can see that this $\sigma$ is in the support of $\mathbf{T}$, which implies that
\begin{equation}
    \forall j \in \{1, \ldots, n\},\ \sigma(j) \in \mathcal{T}_{j,:}.
\end{equation}
Together with Eq. \eqref{eq:connection_l}, it follows that
\begin{equation}
    \forall (i,j) \in \mathcal{A}, (i, \sigma(j)) \in \{i\} \times \mathcal{T}_{j,:} \subset \hat{\mathcal{A}}.
\end{equation}
Denote
\begin{equation}
\label{eq:21_l}
    \sigma(\mathcal{A}) = \{(i, \sigma(j)) \mid(i, j) \in \mathcal{A}\}.
\end{equation}
Then we have
\begin{equation} \label{eq:inclusion_l}
    \sigma(\mathcal{A}) \subset \hat{\mathcal{A}}.
\end{equation}
According to Assumption \ref{assum:linear_ss1_4}, we can see that
\begin{equation}
    |\hat{\mathcal{A}}| \leq |\mathcal{A}| = |\sigma(\mathcal{A})|. 
\end{equation}
Together with Eq. \eqref{eq:inclusion_l}, we have
\begin{equation} \label{eq:27_l}
    \sigma(\mathcal{A}) = \hat{\mathcal{A}}.
\end{equation}
Suppose $\mathbf{T} \neq \mathbf{D} \mathbf{P}$, there must exists $j_1$ and $j_2$ such that 
\begin{equation}
    \exists j_1 \neq j_2,\ \mathcal{T}_{j_1, :} \cap \mathcal{T}_{j_2, :} \neq \emptyset.
\end{equation}
Besides, consider $j_3 \in \{1, \ldots, n\}$ such that
\begin{equation}
    \sigma(j_3) \in \mathcal{T}_{j_1, :} \cap \mathcal{T}_{j_2, :}.
\end{equation}
Because $j_1 \neq j_2$, we can assume $j_3 \neq j_1$ without loss of generality. By Assumption \ref{assum:linear_ss1_5}, there exists $\mathcal{C}_{j_1} \ni j_1$ such that $\bigcap_{i \in \mathcal{C}_{j_1}} \mathcal{A}_{i,:}=\{j_1\}$. Because 
\begin{equation}
    j_3 \not\in \{j_1\} =  \bigcap_{i \in \mathcal{C}_{j_1}} \mathcal{A}_{i, :},
\end{equation}
there must exists $i_3 \in \mathcal{C}_{j_1}$ such that
\begin{equation} \label{eq:29_l}
    j_3 \not \in \mathcal{A}_{i_3, :}.
\end{equation}
Because $j_1 \in \mathcal{A}_{i_3, :}$, we have $(i_3, j_1) \in \mathcal{A}$. Then according to Eq. \eqref{eq:connection_l}, we have the following equation
\begin{equation} \label{eq:30_l} 
    \{i_3\} \times \mathcal{T}_{j_1, :} \subset \hat{\mathcal{A}}.
\end{equation}
Notice that $\sigma(j_3) \in \mathcal{T}_{j_1, :} \cap \mathcal{T}_{j_2, :}$ implies 
\begin{equation} \label{eq:31_l}
    (i_3, \sigma(j_3)) \in \{i_3\} \times \mathcal{T}_{j_1, :}.
\end{equation}
Then by Eqs. \eqref{eq:31_l} and \eqref{eq:30_l}, we have
\begin{equation}
    (i_3, \sigma(j_3)) \in \hat{\mathcal{A}},
\end{equation}
which implies $(i_3, j_3) \in \mathcal{A}$ by Eqs. \eqref{eq:27_l} and \eqref{eq:21_l}, contradicting Eq. \eqref{eq:29_l}. Thus, we prove by contradiction that $\mathbf{T} = \mathbf{D} \mathbf{P}$,  which is the goal (i.e., Eq. \eqref{eq:goal_l}).
\end{proof}

\subsection{Proof of Theorem \ref{thm:l_identifiability_ss_2}}\label{sec:proof_l_identifiability_ss_2}
\LIdentifiabilitySSTwo*

\begin{proof}

Because of Assumptions ( \ref{assum:linear_ss2_3}), we consider the following combinatorial optimization
\begin{equation}
    \hat{\mathbf{U}}\,\,\,\coloneqq\,\,\,\argmin_{\substack{\mathbf{U}\in\mathbb{R}^{s\times s};\\\mathbf{U}\mathbf{U}^{\top}=\mathbf{I}_s}}\,\,\,\|\mathbf{A}\mathbf{U}\|_0,
\end{equation}
where $\mathbf{A}$ is the true mixing matrix and $\hat{\mathbf{U}}$ denotes the rotation matrix corresponding to the solution of the optimization problem. Let $\hat{\mathbf{A}} = \mathbf{A}\hat{\mathbf{U}}$. 

Suppose $\hat{\A} \neq \A \mathbf{D} \mathbf{P}$, then $\hat{\mathbf{U}} \neq \mathbf{D} \mathbf{P}$. This implies that there exists some $j^{\prime} \in \{1, \ldots, s\}$ and its corresponding set of row indices $\mathcal{I}_{j^{\prime}}$ ($\left|\mathcal{I}_{j^{\prime}}\right| > 1$), such that $\hat{\mathbf{U}}_{i,j^{\prime}} \neq 0$ for all $i \in \mathcal{I}_j^{\prime}$, and $\hat{\mathbf{U}}_{i,j^{\prime}} = 0$ for all $i \notin \mathcal{I}_{j^{\prime}}$. Because $\hat{U}$ is invertible and has full row rank, there exists one row index $i^{\prime}$ in $\mathcal{I}_{j^{\prime}}$ that uniquely correspond to $j^{\prime}$, in order to avoid linear dependence among columns. Let $\hat{\mathbf{U}} \coloneqq \left[\hat{\mathbf{U}}_{1} \cdots \hat{\mathbf{U}}_{s}\right]$, we have

\begin{equation}
    \left\|\hat{\mathbf{A}}_{j^{\prime}}\right\|_{0} = \left\|\mathbf{A} \hat{\mathbf{U}}_{j^{\prime}}\right\|_{0} = \left\|\sum_{i \in \mathcal{I}_{j^{\prime}}} \mathbf{A}_{i} \hat{\mathbf{U}}_{i,{j^{\prime}}}\right\|_{0}.
\end{equation}

Let $\A_{\mathcal{I}_{j^{\prime}}} \in \mathbb{R}^{m \times \left|\mathcal{I}_{j^{\prime}}\right|}$ represents a submatrix of $\mathbf{A}$ consisting of columns with indices $\mathcal{I}_{j^{\prime}}$. Note that with a slight abuse of notation, $\mathbf{A}_{i}$ denotes $i$-th column of the matrix $\A$.
According to Assumptions (\ref{assum:linear_ss2_2}, \ref{assum:linear_ss2_4}), since $\left|\mathcal{I}_{j^{\prime}}\right| > 1$ and $\hat{\mathbf{U}}_{i,j^{\prime}} \neq 0$, we have

\begin{equation}
     \left\|\sum_{i \in \mathcal{I}_{j^{\prime}}} \mathbf{A}_{i} \hat{\mathbf{U}}_{i,{j^{\prime}}}\right\|_{0} \geq  \left|\underset{i \in \mathcal{I}_{j^{\prime}}}{\bigcup}\operatorname{supp}(\mathbf{A}_i)\right| -  \operatorname{rank}(\operatorname{overlap}(\A_{\mathcal{I}_{j^{\prime}}})) > \left|\operatorname{supp}(\mathbf{A}_{i^{\prime}})\right|,
\end{equation}

where $\operatorname{overlap}(\cdot)$ is defined as Defn. \ref{overlap}. Term $\operatorname{rank}(\operatorname{overlap}(\A_{\mathcal{I}_{j^{\prime}}}))$ represents the maximal number of rows, in which all non-zero entries can be possibly cancelled out by the linear combination $\sum_{i \in \mathcal{I}_{j^{\prime}}} \mathbf{A}_{i}$. Assumption \ref{assum:linear_ss2_2} rules out a specific set of parameters that leads to a violation of that, e.g., two columns of $\A$ are identical in terms of element values and support. So it follows that

\begin{equation}\label{eq:ineq}
    \left\|\sum_{i \in \mathcal{I}_{j^{\prime}}} \mathbf{A}_{i} \hat{\mathbf{U}}_{i,{j^{\prime}}}\right\|_{0} > \left|\operatorname{supp}(\mathbf{A}_{i^{\prime}})\right| = \left\| \mathbf{A}_{i^{\prime}}\right\|_{0} = \left\| \mathbf{A}_{i^{\prime}}\hat{\mathbf{U}}_{i^{\prime},{j^{\prime}}}\right\|_{0}.
\end{equation}

Then we can construct $\Tilde{\mathbf{U}} := \left[\tilde{\mathbf{U}}_{1} \cdots \tilde{\mathbf{U}}_{s}\right]$. First, we set $\tilde{\mathbf{U}}_{i^{\prime}, {j^{\prime}}}$ as a unique non-zero entry in column $\tilde{\mathbf{U}}_{j^{\prime}}$. For simplicity, we can just set $\tilde{\mathbf{U}}_{i^{\prime}, {j^{\prime}}} = 1$. For other column $\tilde{\mathbf{U}}_{j}$, where $j \neq j^{\prime}$ and $\hat{\mathbf{U}}_{i, j} \neq 0$, we set  $\tilde{\mathbf{U}}_{i, j} = 1$. Therefore

\begin{equation}
    \left\{
        \begin{aligned}
            \left\|\mathbf{A} \hat{\mathbf{U}}_{j}\right\|_{0} > \left\|\mathbf{A} \tilde{\mathbf{U}}_{j}\right\|_{0}, &\quad j = j^{\prime},\\
            \left\|\mathbf{A} \hat{\mathbf{U}}_{j}\right\|_{0} = \left\|\mathbf{A} \tilde{\mathbf{U}}_{j}\right\|_{0}, &\quad j\neq j^{\prime}.
        \end{aligned}
        \right
        .
\end{equation}

Since Assumption \ref{assum:linear_ss2_4} covers all columns, Eq. \eqref{eq:ineq} holds for any $j' \in \{1, \ldots, s\}$. If there are multiple columns of $\hat{U}$ with more than one nonzero entry, we derive Eq. \eqref{eq:ineq} for each of them. We denote the set of different target column indices $j'$ as $\mathbf{J}$. For $j \in \mathbf{J}$, we set $\tilde{\mathbf{U}}_{i_{j}, {j}} = 1$, where $i_j$ is the unique index of the corresponding non-zero entry in column $\hat{U}_j$. For other column $\tilde{\mathbf{U}}_{j}$, where $j \notin \mathbf{J}$ and $\hat{\mathbf{U}}_{i, j} \neq 0$, we set $\tilde{\mathbf{U}}_{i, j} = 1$. Then we have

\begin{equation}
    \left\{
        \begin{aligned}
            \left\|\mathbf{A} \hat{\mathbf{U}}_{j}\right\|_{0} > \left\|\mathbf{A} \tilde{\mathbf{U}}_{j}\right\|_{0}, &\quad j \in \mathbf{J},\\
            \left\|\mathbf{A} \hat{\mathbf{U}}_{j}\right\|_{0} = \left\|\mathbf{A} \tilde{\mathbf{U}}_{j}\right\|_{0}, &\quad j \notin \mathbf{J}.
        \end{aligned}
        \right
        .
\end{equation}

As noted previously, every column index $j$ corresponds to a unique row index. $\tilde{\mathbf{U}}$ is a permutation matrix and $\tilde{\mathbf{U}}\tilde{\mathbf{U}}^{\top} = \mathbf{I}_s$. It then follows that $\left\|\mathbf{A}\hat{\mathbf{U}}\right\|_{0} > \left\|\mathbf{A}\tilde{\mathbf{U}}\right\|_{0}$, which contradicts the definition of $\hat{\mathbf{U}}$.
\end{proof}

\subsection{Proof of Theorem \ref{thm:nl_identifiability_ss_2}}\label{sec:proof_nl_identifiability_ss_2}
\NLIdentifiabilitySSTwo*

\begin{proof}
Let $\mathbf{D}(\s)$ represents a diagonal matrix and $\mathbf{P}$ represent a permutation matrix. By using the chain rule repeatedly, we write $\hat{\f} = \f \circ \h(\s) $ equivalently as
\begin{equation}
\begin{aligned}
\mathbf{J}_{\hat{\mathbf{f}}}(\hat{\s})
&=\mathbf{J}_{\mathbf{f} \circ \h}(\h(\s)) \\
&=\mathbf{J}_{\mathbf{f} \circ \mathbf{g}^{-1} \circ \mathbf{P}^{-1}}(\mathbf{P} \g(\s)) \\
&=\mathbf{J}_{\mathbf{f} \circ \g^{-1}}\left(\mathbf{P}^{-1} \mathbf{P} \g(\s)\right) \mathbf{J}_{\mathbf{P}^{-1}}(\mathbf{P} \g(\s)) \\
&=\mathbf{J}_{\mathbf{f} \circ \g^{-1}}(\g(\s)) \mathbf{J}_{\mathbf{P}^{-1}}(\mathbf{P} \g(\s)) \\
&=\mathbf{J}_{\mathbf{f}}\left(\g^{-1} \g(\s)\right) \mathbf{J}_{\g^{-1}}(\g(\s)) \mathbf{J}_{\mathbf{P}^{-1}}(\mathbf{P} \g(\s)) \\
&=\mathbf{J}_{\mathbf{f}}(\s) \mathbf{D}(\s) \mathbf{P},
\end{aligned}
\end{equation}
where $\g$ is an invertible element-wise function. Thus our goal is equivalent to show that 

\begin{equation} \label{eq:goal}
    \mathbf{J}_{\hat{\f}}(\hat{\s}) =  \mathbf{J}_{\mathbf{f}}(\s) \mathbf{D}(\s) \mathbf{P}. 
\end{equation}

We prove it by contrapositive. 

Because $\hat{\f} = \f\circ\G^{-1}\circ\U\circ\G$, we write

\begin{equation}
\begin{aligned}
\mathbf{J}_{\hat{\f}}(\s)
&=\mathbf{J}_{\f\circ\G^{-1}\circ\hat{\U}\circ\G}(\s) \\ &=\mathbf{J}_{\f\circ\G^{-1}\circ\hat{\U}}(\G(\s))\mathbf{J}_{\G}(\s) \\
&=\mathbf{J}_{\f\circ\G^{-1}}(\hat{\U}\G(\s))\mathbf{J}_{\hat{U}}(\G(\s))\mathbf{J}_{\G}(\s) \\
&=\mathbf{J}_{\f}(\G^{-1}\hat{\U}\G(\s))\mathbf{J}_{\G^{-1}}(\hat{\U}\G(\s))\mathbf{J}_{\hat{U}}(\G(\s))\mathbf{J}_{\G}(\s) \\
&=\mathbf{J}_{\f}(\s)\mathbf{J}_{\G^{-1}}(\hat{\U}\G(\s))\mathbf{J}_{\hat{\U}}(\G(\s))\mathbf{J}_{\G}(\s) \\
&=\mathbf{J}_{\f}(\s)\mathbf{D}_{1}(\s)\hat{\U}\mathbf{D}_{2}(\s),
\end{aligned}
\end{equation}

where we have used the chain rule repeatedly. Because $\G$ is an invertible element-wise transformation, $\mathbf{D}_{1}(\s)$ and $\mathbf{D}_{2}(\s)$ are both diagonal matrices.

Because Assumption \ref{assum:linear_ss2_3} of Thm. \ref{thm:l_identifiability_ss_2} is satisfied for $\mathbf{J}_{\f}(\s)$, we consider the following combinatorial optimization problem
\begin{equation} \label{eq:nl_optimization}
\hat{\mathbf{U}}\,\,\,\coloneqq\,\,\,\argmin_{\substack{\mathbf{U}\in\mathbb{R}^{s\times s};\\\mathbf{U}\mathbf{U}^T=\mathbf{I}_s}}\,\,\,\|\mathbf{J}_{\f}(\s)\mathbf{U}\|_0.
\end{equation}
Let $\bar{\U} = \mathbf{D}_{1}(\s)\hat{\U}\mathbf{D}_{2}(\s)$. Because $\mathbf{J}_{\hat{\f}}(\s) = \mathbf{J}_{\f}(\s)\bar{\mathbf{U}}$, then if $\J(\hat{\f}) = \mathbf{J}_{\f}(\s)\mathbf{D}(\s)\mathbf{P}$, we have $\bar{\mathbf{U}} = \mathbf{D}(\s)\mathbf{P}$. Thus, for every $j \in \{1, \ldots, s\}$, there exists a corresponding $i^{\prime}$, such that $\bar{\mathbf{U}}_{i^{\prime},j} \neq 0$, and $\bar{\mathbf{U}}_{i,j} = 0$ for all $i \neq i^{\prime}$. Because the columns of the matrix of an orthogonal transformation form an orthogonal set, columns of $\bar{\mathbf{U}}$ are linearly independent. Thus, different $j$ cannot correspond to the same $i^{\prime}$, otherwise it is possible for these columns to be linearly dependent.

Suppose $\mathbf{J}_{\hat{\f}}(\s) \neq \mathbf{J}_{\f}(\s)\mathbf{P}$, then $\bar{\U} \neq \mathbf{P}$. There exists $j^{\prime} \in \{1, \ldots, s\}$ and its corresponding set of row indices $\mathcal{I}_{j^{\prime}}$ ($\left|\mathcal{I}_{j^{\prime}}\right| > 1$), such that $\bar{\mathbf{U}}_{i,j^{\prime}} \neq 0$ for all $i \in \mathcal{I}_j^{\prime}$, and $\bar{\mathbf{U}}_{i,j^{\prime}} = 0$ for all $i \notin \mathcal{I}_{j^{\prime}}$. Similarly, there exists one row index $i^{\prime}$ in $\mathcal{I}_{j^{\prime}}$ that uniquely correspond to $j^{\prime}$, in order to avoid linear dependence among columns. Let $\bar{\mathbf{U}} \coloneqq \left[\bar{\mathbf{U}}_{1} \cdots \bar{\mathbf{U}}_{s}\right]$, we have

\begin{equation}
    \left\|\J_{\hat{\f}}(\s)_{j^{\prime}}\right\|_{0} = \left\|\mathbf{J}_{\f}(\s) \bar{\mathbf{U}}_{j^{\prime}}\right\|_{0} = \left\|\sum_{i \in \mathcal{I}_{j^{\prime}}} \mathbf{J}_{\f}(\s)_{i} \bar{\mathbf{U}}_{i,{j^{\prime}}}\right\|_{0}.
\end{equation}

Let $\mathbf{J}_{\f}(\s)_{\mathcal{I}_{j^{\prime}}} \in \mathbb{R}^{m \times \left|\mathcal{I}_{j^{\prime}}\right|}$ represents a submatrix of $\mathbf{J}_{\f}(\s)$ consisting of columns with indices $\mathcal{I}_{j^{\prime}}$. Note that with a slight abuse of notation, $\mathbf{J}_{\f}(\s)_{i}$ denotes $i$-th column of the matrix $\mathbf{J}_{\f}(\s)$.
Because Assumptions (\ref{assum:nl_linear_2}, \ref{assum:nl_linear_4}) of Thm. \ref{thm:l_identifiability_ss_2} hold for $\mathbf{J}_{\f}(\s)$, with $\left|\mathcal{I}_{j^{\prime}}\right| > 1$ and $\bar{\mathbf{U}}_{i,j^{\prime}} \neq 0$, we have

\begin{equation}
       \left\|\sum_{i \in \mathcal{I}_{j^{\prime}}} \mathbf{J}_{\f}(\s)_{i} \bar{\mathbf{U}}_{i,{j^{\prime}}}\right\|_{0} \geq  \left|\underset{i \in \mathcal{I}_{j^{\prime}}}{\bigcup}\operatorname{supp}(\mathbf{J}_{\f}(\s)_i)\right| -  \operatorname{rank}(\operatorname{overlap}(\mathbf{J}_{\f}(\s)_{\mathcal{I}_{j^{\prime}}})) > \left|\operatorname{supp}(\mathbf{J}_{\f}(\s)_{i^{\prime}})\right|,  
\end{equation}

where $\operatorname{overlap}(\cdot)$ is defined as Defn. \ref{overlap}. Note that here we slightly abuse the notation $\operatorname{overlap}(\cdot)$ to make it apply for the matrix-valued function $\mathbf{J}_{\f}(\s)$. Term $\operatorname{rank}(\operatorname{overlap}(\mathbf{J}_{\f}(\s)_{\mathcal{I}_{j^{\prime}}}))$ generally represents the maximal number of rows, in which all non-zero entries can be cancelled out by the linear combination $\sum_{i \in \mathcal{I}_{j^{\prime}}} \mathbf{J}_{\f}(\s)_{i}$. Also with a slight abuse of the notation, Assumption \ref{assum:linear_ss2_2} of Thm. \ref{thm:l_identifiability_ss_2} rules out a specific set of parameters that leads to a violation of that, e.g., two columns of $\mathbf{J}_{\f}(\s)$ are identical in terms of element values and support. So it follows that

\begin{equation} \label{eq:ineq_nl}
    \left\|\sum_{i \in \mathcal{I}_{j^{\prime}}} \mathbf{J}_{\f}(\s)_{i} \bar{\mathbf{U}}_{i,{j^{\prime}}}\right\|_{0} > \left|\operatorname{supp}(\mathbf{J}_{\f}(\s)_{i^{\prime}})\right| = \left\| \mathbf{J}_{\f}(\s)_{i^{\prime}}\right\|_{0} = \left\| \mathbf{J}_{\f}(\s)_{i^{\prime}}\bar{\mathbf{U}}_{i^{\prime},{j^{\prime}}}\right\|_{0}.
\end{equation}

Then we can construct $\Tilde{\mathbf{U}} := \left[\tilde{\mathbf{U}}_{1} \cdots \tilde{\mathbf{U}}_{s}\right]$. First, we set $\tilde{\mathbf{U}}_{i^{\prime}, {j^{\prime}}}$ as a unique non-zero entry in column $\tilde{\mathbf{U}}_{j^{\prime}}$. For simplicity, we can just set $\tilde{\mathbf{U}}_{i^{\prime}, {j^{\prime}}} = 1$. For other column $\tilde{\mathbf{U}}_{j}$, where $j \neq j^{\prime}$ and $\bar{\mathbf{U}}_{i, j} \neq 0$, we set  $\tilde{\mathbf{U}}_{i, j} = 1$. Therefore

\begin{equation}
    \left\{
        \begin{aligned}
            \left\|\mathbf{J}_{\f}(\s) \bar{\mathbf{U}}_{j}\right\|_{0} > \left\|\mathbf{J}_{\f}(\s) \tilde{\mathbf{U}}_{j}\right\|_{0}, &\quad j = j^{\prime},\\
            \left\|\mathbf{J}_{\f}(\s) \bar{\mathbf{U}}_{j}\right\|_{0} = \left\|\mathbf{J}_{\f}(\s) \tilde{\mathbf{U}}_{j}\right\|_{0}, &\quad j\neq j^{\prime}.
        \end{aligned}
        \right
        .
\end{equation}

Since Assumption \ref{assum:linear_ss2_4} of Thm. \ref{thm:l_identifiability_ss_2} covers all columns, Eq. \eqref{eq:ineq_nl} holds for any $j' \in \{1, \ldots, s\}$. If there are multiple columns of $\bar{U}$ with more than one nonzero entry, we derive Eq. \eqref{eq:ineq_nl} for each of them. We denote the set of different target column indices $j'$ as $\mathbf{J}$. For $j \in \mathbf{J}$, we set $\tilde{\mathbf{U}}_{i_{j}, {j}} = 1$, where $i_j$ is the unique index of the corresponding non-zero entry in column $\hat{U}_j$. For other column $\tilde{\mathbf{U}}_{j}$, where $j \notin \mathbf{J}$ and $\bar{\mathbf{U}}_{i, j} \neq 0$, we set $\tilde{\mathbf{U}}_{i, j} = 1$. Then we have

\begin{equation}
    \left\{
        \begin{aligned}
            \left\|\mathbf{J}_{\f}(\s) \bar{\mathbf{U}}_{j}\right\|_{0} > \left\|\mathbf{J}_{\f}(\s) \tilde{\mathbf{U}}_{j}\right\|_{0}, &\quad j \in \mathbf{J},\\
            \left\|\mathbf{J}_{\f}(\s) \bar{\mathbf{U}}_{j}\right\|_{0} = \left\|\mathbf{J}_{\f}(\s) \tilde{\mathbf{U}}_{j}\right\|_{0}, &\quad j \notin \mathbf{J}.
        \end{aligned}
        \right
        .
\end{equation}

As noted previously, every column index $j$ corresponds to a unique row index $i$. $\tilde{\mathbf{U}}$ is a permutation matrix and $\tilde{\mathbf{U}}\tilde{\mathbf{U}}^{\top} = \mathbf{I}_s$. It then follows that $\left\|\mathbf{J}_{\f}(\s)\bar{\mathbf{U}}\right\|_{0} > \left\|\mathbf{J}_{\f}(\s)\tilde{\mathbf{U}}\right\|_{0}$, which contradicts the definition of $\bar{\mathbf{U}}$.
\end{proof}

\subsection{Proof of Proposition \ref{prop:reg}}\label{sec:proof_reg}
\reg*

\begin{proof}
According to Hadamard's inequality, we have
\begin{equation}
    \sum_{i=1}^{n} \log \left\|\frac{\partial \hat{\mathbf{f}}^{-1}}{\partial x_{i}}\right\|_2-\log \left|\det(\mathbf{J}_{\hat{\mathbf{f}}^{-1}}(\mathbf{x}))\right| \geq 0,
\end{equation}
with equality iff. vectors $\frac{\partial \hat{\mathbf{f}}^{-1}}{\partial x_i},i=1,2,\dots,n$ are orthogonal. Then applying the inequality of arithmetic and geometric means yields
\begin{equation}
\begin{aligned}
        &n\log\left(\frac{1}{n} \sum_{i=1}^{n}\left\|\frac{\partial \hat{\mathbf{f}}^{-1}}{\partial x_{i}}\right\|_2\right)-\log \left|\det(\mathbf{J}_{\hat{\mathbf{f}}^{-1}}(\mathbf{x}))\right|\\
        \geq &\sum_{i=1}^{n} \log \left\|\frac{\partial \hat{\mathbf{f}}^{-1}}{\partial x_{i}}\right\|_2-\log \left|\det(\mathbf{J}_{\hat{\mathbf{f}}^{-1}}(\mathbf{x}))\right|\\
        \geq &0,    
\end{aligned}
\end{equation}
with the first equality iff. for all $i=1, 2, \dots, n$, $\left\|\frac{\partial \hat{\mathbf{f}}^{-1}}{\partial x_i}\right\|_2$ is equal. Because $\mathbf{J}_{\hat{\mathbf{f}}^{-1}}(\mathbf{x})$ is non-singular, $\frac{\partial \hat{\mathbf{f}}^{-1}}{\partial x_i}$ are orthogonal and equal to each other for all $i=1, 2, \dots, n$ iff. $\mathbf{J}_{\hat{\mathbf{f}}^{-1}}(\mathbf{x}) = \mathbf{O}(\mathbf{x})\lambda(\mathbf{x})$. 
\end{proof}

\section{Experiments} \label{sec:ex_ap}



In order to generate observational data satisfying the required assumptions, we simulate the sources and mixing process as follows:

\textbf{\textit{SS}}. \ \ \
To guarantee that the ground-truth nonlinear mixing process satisfies the structured sparsity condition (Assumption \ref{assum:nl_5} in Thm. \ref{thm:nl_identifiability_ss}), we generate observed variables with ``structured” multi-layer perceptrons (MLPs): Each observed variables is only a nonlinear mixture of its own parents. For example, if the observed variable $\x_1$ has parents $\s_1$ and $\s_2$, then $\x_1 = \operatorname{MLP}(\s_1, \s_2)$. The $\operatorname{MLP}$ here could be replaced by any nonlinear functions.

\textbf{\textit{II}}. \ \ \  
We generate the mixing functions based on Möbius transformations with scaled sources and constant volume. According to Liouville's results \citep{flanders1966liouville} (also summarized in Theorem F.2 in \citet{gresele2021independent}), the Möbius transformation guarantees the orthogonality between the columns vectors of its Jacobian matrix, which achieves uncorrelatedness after centering. We scaled the sources while preserving volumes before Möbius transformation with distinct scalers to make sure that the generating process is not a conformal map. We center the columns of the Jacobian.

\textbf{\textit{VP}}. \ \ \ 
Here we describe the generating process with a factorizable Jacobian determinant but not necessarily with orthogonal columns of Jacobian. We use a volume-preserving flow called GIN \citep{sorrenson2020disentanglement} to generate the mixing function. GIN is a volume-preserving version of RealNVP \citep{dinh2016density}, which achieves volume preservation by setting the scaling function of the final component to the negative sum of previous ones.\footnote{In the official implementation of GIN, the volume-preservation is achieved in a slightly different way compared to that in its original paper for better stability of training. There is no difference in the theoretical result w.r.t. volume-preservation.} We use the official implementation of GIN \citep{sorrenson2020disentanglement}, which is part of FrEIA. 

\textbf{\textit{Base}}. \ \ \ 
Here we describe the generating process without restrictions on having a factorizable Jacobian determinant and orthogonal columns of the Jacobian. Following \citep{sorrenson2020disentanglement}, we use GLOW \citep{kingma2018glow} to generate the mixing function. The difference between the coupling block in GLOW and GIN is that the Jacobian determinant of the former is not constrained to be one. The implementation of GLOW is also included in the official implementation of GIN \citep{sorrenson2020disentanglement}, which is also part of FrEIA. 

The ground-truth sources are sampled from a multivariate Gaussian, with zero means and variances sampled from a uniform distribution on $[0.5, 3]$\footnote{These are of the same values as previous works \citep{khemakhem2020variational, sorrenson2020disentanglement}.}. It is worth noting that we sample sources from a single multivariate Gaussian so that all sources are marginally independent, which is different from all previous works assuming conditional independence given auxiliary variables.

Regarding the model evaluation, we use the mean correlation coefficient (MCC) between the ground-truth and recovered latent sources. We first compute pair-wise correlation coefficients between the true sources and recovered ones. Then we solve an assignment problem to match each recovered source to the ground truth with the highest correlation between them. MCC is a standard metric to measure the degree of identifiability up to component-wise transformation in the literature \citep{hyvarinen2016unsupervised}. All results are of $10$ trials with different random seeds.

The sample size for the synthetic datasets is $10000$. For experiments conducted on them, the learning rate is $0.01$ and batch size is $1000$. The number of coupling layers for both GIN and GLOW is set as $24$. Regarding the image dataset, we have $25000$ $32 \times 32$ images of the drawn triangle. The statistic of the dataset is described in \citep{annoymous2022iclr}, with the difference that we only use one class of triangles for unconditional priors. For experiments conducted on images, the learning rate is $3 \times 10^{-4}$ and batch size is $100$. The number of coupling layers for the estimating method GIN is set as $10$. The experiments are directly conducted with the official implementation of GIN \citep{sorrenson2020disentanglement} \footnote{https://github.com/VLL-HD/GIN} with additional regularization terms and on $4$ CPU cores with $16$ GB RAM.


\end{document}